\documentclass{article}

\usepackage[main, final]{neurips_2026_my}

\usepackage[utf8]{inputenc} 
\usepackage[T1]{fontenc}    
\usepackage{hyperref}       
\usepackage{url}            
\usepackage{booktabs}       
\usepackage{amsfonts}       
\usepackage{nicefrac}       
\usepackage{microtype}      
\usepackage{xcolor}         
\usepackage{subfiles}
\usepackage{amsmath}
\usepackage{graphicx}
\usepackage{wrapfig}
\usepackage{enumitem}
\usepackage[font=small,labelfont=bf]{caption}
\usepackage{booktabs}
\usepackage{multirow}
\usepackage{makecell}
\usepackage{url}            
\hypersetup{
    colorlinks=true,
    linkcolor=blue, 
    citecolor=blue,
    urlcolor=blue
}
\usepackage{subcaption}

\newcommand{\Paragraph}[1]{\vspace{-1mm} \noindent \textbf{#1} \hspace{0mm}}

\usepackage{amsthm}
\newtheorem{lemma}{Lemma}
\newtheorem{proposition}{Proposition}
\newtheorem{assumption}{Assumption}
\newtheorem{definition}{Definition}
\newtheorem{remark}{Remark}
\newtheorem{observation}{Observation}
\newtheorem{theorem}{Theorem}
\newtheorem{corollary}{Corollary}

\title{Non-Colliding Biometric Identities for Digital Entities: Geometry, Capacity, and Million-Scale Virtual Identity Provisioning}

\author{%
  Yuyang Ji$^1$, Yixuan Shen$^1$, Anil Jain$^2$, Xiaoming Liu$^3$, Feng Liu$^1$\vspace{1mm}\\ 
  $^1$ Department of Computer Science, Drexel University\\
  $^2$ Department of Computer Science and Engineering, Michigan State University\\
  $^3$ Department of Computer Science, University of North Carolina at Chapel Hill\\
  \texttt{jain@msu.edu, liuxm@cs.unc.edu, \{yj428,ys844,fl397\}@drexel.edu} \\
}

\begin{document}

\maketitle

\begin{abstract}
Digital entities such as AI agents and humanoid robots increasingly operate alongside real humans, yet their identity infrastructure is based on credentials rather than embodied biometric identity. We introduce Biometric Identity Provisioning (BIP), a new problem and solution framework that addresses: given an enrollment gallery of real human identities, provision virtual identities that are non-colliding with every enrolled identity, maintain sufficient inter-class separability, and are realizable as high-fidelity face images. The key geometric insight is that real face identities occupy a low-dimensional subspace of the embedding hypersphere, leaving no residual subspace for virtual identities. Hence, virtual identities must instead be allocated as unclaimed gaps within the real face manifold itself. BIP is therefore a constrained packing problem: available gaps vastly exceed any foreseeable enrollment scale, and provisioned identities remain non-colliding even as new real identities are subsequently enrolled. Grounded in this geometry, our repulsion-based allocation is not bounded by any fixed provisioning count; we demonstrate 10M non-colliding virtual identity embeddings against a gallery of 360K real identities. Realizing these embeddings as face images requires a generator that operates outside the training distribution of real face images; we introduce GapGen, a gap-aware generator trained with a curriculum that progressively extends synthesis
into non-colliding regions, validated at 1M photorealistic virtual face images. We further construct v-LFW, a virtual counterpart to LFW face dataset, with protocols for virtual face verification, cross-reality matching, real-vs-virtual detection, and unified recognition and detection.
%
\end{abstract}

\section{Introduction}\label{sec:intro}

Identity systems are being reshaped by a new class of actors: AI agents 
autonomously coordinate enterprise workflows~\cite{south2025identity}, 
humanoid robots interact face-to-face with people in service environments, 
and digital co-workers operate under zero-trust policies alongside human 
employees.
This is not a projected future: a recent industry survey reports
that 40\% of organizations already have AI agents in production,
with another 31\% running pilots or tests~\cite{csa2026securingagents}.
Microsoft has launched dedicated infrastructure to govern these agents as 
``first-class'' identities~\cite{microsoft2025entralearn}.
Yet this infrastructure assigns credential identity (tokens, certificates, 
access scopes), not \emph{embodied biometric identity}.
As digital entities operate in physical spaces alongside real humans, a question arises: \textit{how do we provision digital entities with persistent, 
unique biometric identities that do not collide with any enrolled real human 
identity?}
We term this the \textbf{Biometric Identity Provisioning (BIP)} 
problem (see Fig.~\ref{fig:teaser}).

Face is the natural entry point for biometric identity provisioning  for virtual entities: it is the modality most directly involved in human-entity interaction, the most mature in generative modeling, and the one for which large-scale enrollment galleries already exist.
A substantial body of work has studied synthetic face generation, but with a fundamentally different objective: producing training data for face recognition models. 
%
Methods such as Arc2Face~\cite{papantoniou2024arc2face} and
Vec2Face/Vec2Face+~\cite{wu2024vec2face,wu2025vec2faceplus} condition face 
generation on real human identity embeddings, generating faces that correspond to
existing real people in embedding space; they are \emph{identity cloners},
not \emph{identity creators}.
%
Methods without this conditioning, such as DCFace~\cite{kim2023dcface}, optimize for separability among
synthetic faces, but whether generated identities collide with real enrolled humans is simply not a design concern; empirically, we show such collisions do occur at non-trivial rates.
Thus, a synthetic identity useful for training a classifier is not necessarily a biometric identity suitable to assign to a digital entity. Existing methods neither define nor jointly satisfy the requirements of high-fidelity realization, inter-class separability, and guaranteed \textbf{non-collision} with any enrolled real humans.

\begin{figure}
  \centering
  \includegraphics[width=\linewidth]{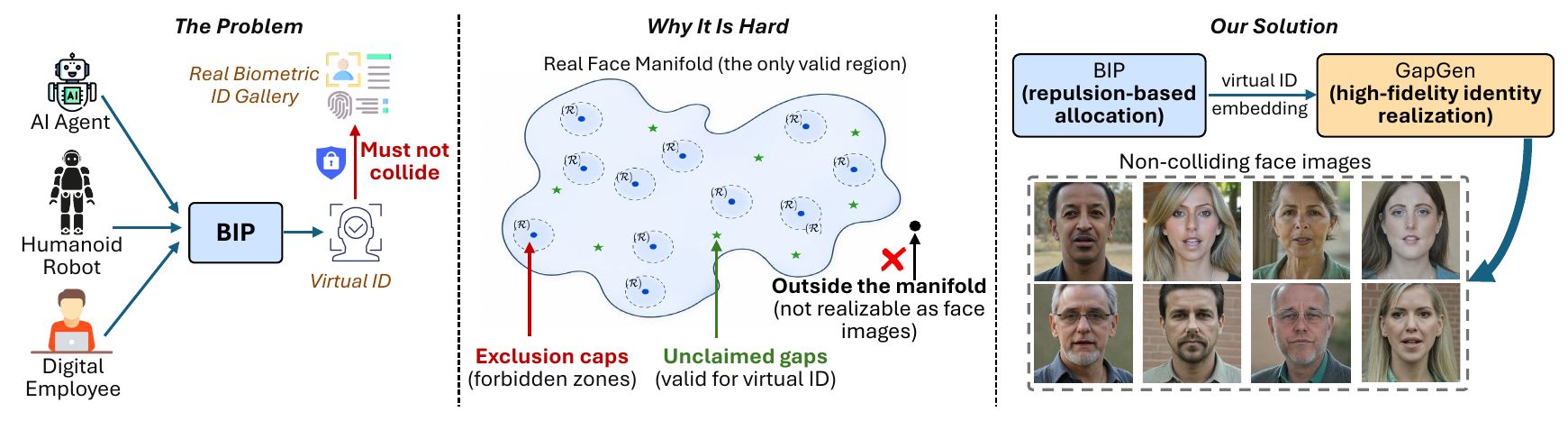}
  \vspace{-6mm}
  \caption{ \small
\textbf{Should a digital entity have a distinctive  face?}
As AI agents, humanoid robots, and digital employees operate
alongside real humans, they require persistent biometric
identities that must not be mistaken for any enrolled real
person (left).
This is geometrically non-trivial: real face identities occupy
a low-dimensional manifold with no residual subspace, forcing
virtual identities into the narrow unclaimed gaps between
exclusion caps; outside the manifold, embeddings cannot be
realized as natural face images (middle).
BIP provisions non-colliding virtual identity embeddings via
repulsion-based allocation, and GapGen realizes them
as high-fidelity $1024{\times}1024$ portrait faces (right).
}
  \label{fig:teaser}
  \vspace{-3mm}
\end{figure}

Hence, we propose a new formulation, termed BIP: synthetic identities shall be
treated not merely as generated images, but as allocatable positions in a
shared biometric identity space.
We define BIP as: 
\begin{center}
	\vspace*{-0.2cm}
	\setlength\fboxrule{0.0pt}
	\noindent\fcolorbox{black}[rgb]{0.95,0.95,0.95}{\begin{minipage}{0.98\linewidth}
\textit{Given a mapping from face images to $d$-dimensional
L2-normalized embeddings on $\mathbb{S}^{d-1}$
and an enrollment gallery $\mathcal{R} \subset \mathbb{S}^{d-1}$,
provision a set of virtual embeddings $\mathcal{V} \subset \mathbb{S}^{d-1}$ satisfying
non-collision with $\mathcal{R}$ at threshold $\tau$,
inter-class separability within $\mathcal{V}$,
and high-fidelity realizability.}
	\end{minipage}}
	\vspace*{-0.2cm}
\end{center}
A natural but na\"ive geometric view treats BIP as a \textbf{capacity} problem:
the number of separable positions on $\mathbb{S}^{d-1}$ at threshold
$\tau$ is arbitrarily large even within the effective face
subspace, suggesting ample room for virtual identities.
This view is misleading: 
\textbf{\emph{(1) No residual subspace exists.}}
Virtual identities satisfying BIP constraints do not occupy
arbitrary low-energy directions of the hypersphere; their PCA
energy distribution closely follows that of real identities, with
both concentrating over 95\% of their variance within the same
$k{=}269$ leading principal components out of $d{=}512$.
Valid virtual identities must occupy \emph{unclaimed gaps
within the real face manifold} rather than a geometrically separate
region.
\textbf{\emph{(2) Capacity does not equal realizability.}}
Embedding positions or simply embeddings satisfying the non-collision constraint are not
automatically realizable as high-fidelity face images:
the further a virtual identity is pushed from real identity
territories, the harder it becomes for a generative model to
generate it as a natural face.
BIP is, therefore, a constrained identity-allocation problem on the
real identity manifold, not an unconstrained sampling problem on
$\mathbb{S}^{d-1}$.

Grounded in this geometric view, we address BIP through an identity-first pipeline that separates allocation in embedding space from realization in image space:

\emph{\textbf{(1) Formalization.}}
We introduce the first formal definition of BIP and introduce
three evaluation metrics operationalizing its requirements:
non-collision, inter-class separability, and perceptual image
quality.
\emph{\textbf{(2) Geometry and allocation.}}
We characterize the geometry of the real identity
manifold and show that safe virtual identities must occupy
directional gaps within it.
We derive the \textbf{repulsion direction}, $z^*$ as a proposed heuristic
that moves candidates away from real identity clusters, enriched
with PCA-aware noise to maintain face-manifold compatibility.
Combined with exact hard checks, the allocation strategy is not
bounded by any fixed provisioning count: the face manifold
contains vastly more unclaimed gaps than any foreseeable
enrollment gallery, and we realize and demonstrate 1M non-colliding
virtual identity embeddings against a gallery of 360K real
identities with no observed collision.
\emph{\textbf{(3) Realization via GapGen.}}
We introduce a gap-aware generator trained with a progressive curriculum
that extends face synthesis into the non-collision region.
\emph{\textbf{(4) Benchmark.}}
We construct \textbf{v-LFW}, a virtual counterpart to LFW face dataset with
5{,}749 provisioned identities and 13{,}233 images, and introduce protocols spanning virtual face verification, cross-reality
matching, real-vs-virtual detection, and unified recognition and
detection.
To support these protocols, we provide IAPCT (Identity-Anchored
Patch Consistency Transformer) as a lightweight diagnostic
evaluation tool.


\section{Related Work}

\Paragraph{Synthetic Face Generation for Recognition.}
%
Synthetic face data has been widely explored as a privacy-preserving
alternative to web-collected face datasets.
Early work such as SynFace~\cite{qiu2021synface} and
SFace~\cite{boutros2022sface} uses class-conditional generative models,
while DigiFace-1M~\cite{bae2023digiface} adopts graphics-based rendering.
More recent methods improve identity diversity via diffusion:
IDiff-Face~\cite{boutros2023idiff} introduces identity-conditioned
diffusion, and DCFace~\cite{kim2023dcface} disentangles identity and
style via dual-condition diffusion.
Arc2Face~\cite{papantoniou2024arc2face} and
Vec2Face and Vec2Face+~\cite{wu2024vec2face,wu2025vec2faceplus} further
generate identity- and attribute-controllable face datasets from recognition-based
features, with Vec2Face+ explicitly controlling inter-class separability,
intra-class variation, and identity consistency.
VIGFace~\cite{kim2025vigface} is especially related because it pre-assigns
virtual identities in feature space before synthesis.
However, these methods are designed primarily for privacy-friendly or
high-performing face recognition training data.
They do not formulate identity generation as gallery-conditioned provisioning:
whether a generated identity collides with an enrolled real human identity
is not a primary constraint, and persistent assignment to digital entities is
outside their problem setting.

\Paragraph{Hyperspherical Identity Allocation.}
Modern face recognition systems map faces to normalized embeddings on a
high-dim hypersphere, where identity verification uses cosine
similarity or angular distance.
SphereFace~\cite{liu2017sphereface}, CosFace~\cite{wang2018cosface},
ArcFace~\cite{deng2019arcface}, and
AdaFace~\cite{kim2022adaface} introduce angular margin losses that
structure this hypersphere to improve inter-class separation and
intra-class compactness, with AdaFace further adapting the margin to
image quality via feature norms.
The most closely related work to BIP is
HyperFace~\cite{shahreza2024hyperface}, which formulates synthetic face
data generation as a packing problem on the recognition hypersphere
and optimizes identity embeddings via gradient descent, while
regularizing embeddings on the face manifold.
However, HyperFace maximizes inter-class separation \emph{among
synthetics only}; non-collision with real humans is not a
design objective.
BIP differs fundamentally: the enrollment gallery $\mathcal{R}$ is a
hard constraint, non-collision with every identity in $\mathcal{R}$
at threshold $\tau$ is the primary requirement, and our geometric
analysis shows that valid virtual identities must reside within the
real identity manifold rather than a geometrically separate space.

\Paragraph{Identity-Aware Forensics.}
Deepfake and face forgery detection has  been formulated
as binary classification between real and manipulated images.
Many detectors exploit low-level artifacts, frequency-domain cues,
blending boundaries, or generator-specific
traces~\cite{rossler2019faceforensics++,shiohara2022detecting,
gu2022region,qian2020thinking}, but they often
generalize poorly across manipulation types and generation models.
Identity-aware forensics treats identity consistency as a forensic
signal: ID-Reveal~\cite{cozzolino2021id} learns person-specific motion
patterns and detects manipulations as deviations from expected behavior,
while other works exploit identity inconsistency or face recognition
features to detect face swaps~\cite{xu2024identity,kim2025selfi}.
These methods assume that real identities already exist and ask
whether a given image is manipulated.
BIP addresses a complementary question: how to provision new biometric
identities guaranteed to lie outside all real identity
territories. The resulting non-overlapping partition of the embedding
space between real and virtual identities is a direct benefit of
the BIP constraints, not a product of forensic training.

\begin{figure}
  \centering
  \includegraphics[width=\linewidth]{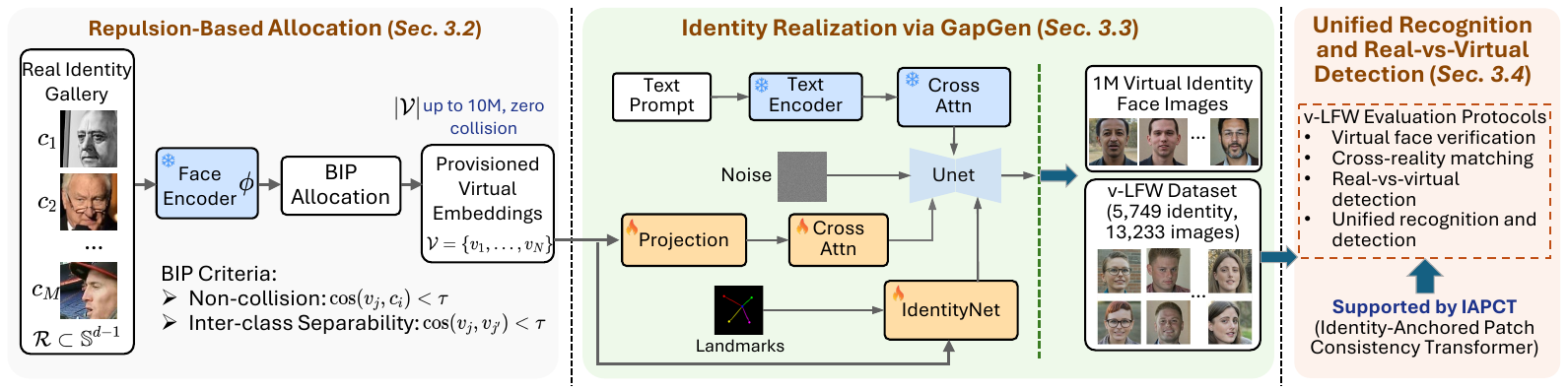}
\vspace{-5mm}
\caption{\small \textbf{BIP pipeline}. Left: Repulsion-based allocation provisions
$\mathcal{V}{=}\{v_1,\ldots,v_N\}$ satisfying
$\cos(v_j,c_i){<}\tau$ and $\cos(v_j,v_{j'}){<}\tau$,
scaling to $|\mathcal{V}|{=}10$M with zero observed collision. Middle: GapGen renders each $s \in \mathcal{V}$
into a $1024{\times}1024$ face image $\tilde{x}{=}G(s)$,
producing 1M virtual identity images and the v-LFW
benchmark.
Right: v-LFW supports four protocols spanning virtual
face verification, cross-reality matching, real-vs-virtual
detection, and unified recognition and detection, supported by
IAPCT as a lightweight diagnostic tool.}
\vspace{-4mm}
\label{fig:overview}
\end{figure}

\section{Methodology}
\label{sec:method}

\subsection{Formalization}
\label{sec:formalization}

Let $\phi: \mathcal{X} \rightarrow \mathbb{S}^{d-1}$ denote a face
recognition encoder that maps a face image $x$ to a $d$-dimensional
L2-normalized embedding on the unit hypersphere $\mathbb{S}^{d-1}$.

\begin{assumption}[Angular-margin Encoder]
\label{ass:encoder}
The encoder $\phi$ is trained with an angular margin loss
(\emph{e.g.}, ArcFace~\cite{deng2019arcface} or
AdaFace~\cite{kim2022adaface}).
Embeddings with $\cos(\phi(x), \phi(x')) \geq \tau$ are accepted as
the same identity, where $\tau \in (0,1)$ is set at the operating
point of the recognition system (Refer to Appx.~\ref{app:threshold} for details on $\tau$).
Under Assumption~\ref{ass:encoder}, $\tau$ defines an angular
boundary on $\mathbb{S}^{d-1}$: any two embeddings within angular
distance $\arccos(\tau)$ are recognized as the same identity.
\end{assumption}

Each real identity $i \in \{1,\ldots,M\}$ with images
$\mathcal{X}_i = \{x_i^{(1)}, \ldots, x_i^{(n_i)}\}$ is represented
by the dominant direction of its embedding cluster:
\begin{equation}
  c_i = \mathrm{normalize}\!\left(
        \sum_{k=1}^{n_i} \phi\!\left(x_i^{(k)}\right)
        \right),
  \label{eq:centroid}
\end{equation}
with $c_i = \phi(x_i^{(1)})$ when $n_i = 1$.
The enrollment gallery is
$\mathcal{R} = \{c_1, \ldots, c_M\} \subset \mathbb{S}^{d-1}$.

\begin{definition}[Biometric Identity Provisioning]
\label{def:bip}
Given an encoder $\phi$ satisfying Assumption~\ref{ass:encoder},
an enrollment gallery $\mathcal{R} \subset \mathbb{S}^{d-1}$ of $M$
real identity centroids, a verification threshold $\tau$, and a
provisioning count $N$, the \textbf{Biometric Identity Provisioning
(BIP)} problem is to generate
$\mathcal{V} = \{v_1, \ldots, v_N\} \subset \mathbb{S}^{d-1}$
satisfying:
\begin{align}
  \cos(v_j, c_i) &< \tau,
  \quad \forall i \in \{1,\ldots,M\},\; \forall j \in \{1,\ldots,N\}
  \label{eq:noncollision} \\
  \cos(v_j, v_{j'}) &< \tau,
  \quad \forall j \neq j'.
  \label{eq:separability}
\end{align}
\end{definition}
The realizability requirement that each $v_j$ corresponds to a
high-fidelity face image whose re-encoded embedding satisfies
Eqs.~\eqref{eq:noncollision}--\eqref{eq:separability}, is
addressed by the gap-aware generator in Sec.~\ref{sec:realization}.
Fig.~\ref{fig:overview} summarizes the full pipeline of allocation, realization, and evaluation.

\begin{remark}
\label{rem:galleryrelative}
The constraints in Definition~\ref{def:bip} are
\emph{gallery-relative}: they are guaranteed w.r.t.~$\mathcal{R}$ at provisioning time.
When new real identities $\Delta\mathcal{R}$ are subsequently enrolled,
any $v_j$ with $\cos(v_j, r') \geq \tau$ for some
$r' \in \Delta\mathcal{R}$ must be revoked and reassigned against the
updated gallery $\mathcal{R} \cup \Delta\mathcal{R}$.
This requires $O(|\mathcal{V}| \times |\Delta\mathcal{R}|)$ cosine
operations in the exact case, reducible with approximate
nearest-neighbor indexing~\cite{johnson2021faiss}.
For open-world robustness against subsequently enrolled real
identities, a safety buffer can be applied at provisioning
time; Appx.~\ref{app:capacity} (Proposition~\ref{prop:buffer})
formalizes this guarantee, and open-world generalization is
empirically validated in Sec.~\ref{sec:exp}.
\end{remark}

\Paragraph{BIP Criteria.}
Three metrics directly operationalize Definition~\ref{def:bip}.
\textbf{\emph{(i) Non-collision:}} the percentage of provisioned
virtual identities $v_j \in \mathcal{V}$ satisfying
Eq.~\eqref{eq:noncollision} against all $M$ enrolled real centroids
in $\mathcal{R}$; a score below 100\% means some virtual identities
are indistinguishable from real humans in the recognition
system, posing a direct privacy risk.
\textbf{\emph{(ii) Inter-class separability (Inter-Sep):}} the
percentage of virtual identity pairs $(v_j, v_{j'})$ satisfying
Eq.~\eqref{eq:separability}; a score below 100\% means some virtual
entities share an identity, making them unreliable as distinct
assignable identities.
\textbf{\emph{(iii) FID}}~\cite{heusel2017gans} (against
FFHQ~\cite{karras2019style}) measures perceptual image quality of
generated virtual identities against portrait photographs.
Unlike prior synthetic face methods that produce low-resolution
$112{\times}112$ training crops, digital entities require
portrait-quality images displayable and recognizable in physical
environments.

\begin{figure}
  \centering
  \includegraphics[width=\linewidth]{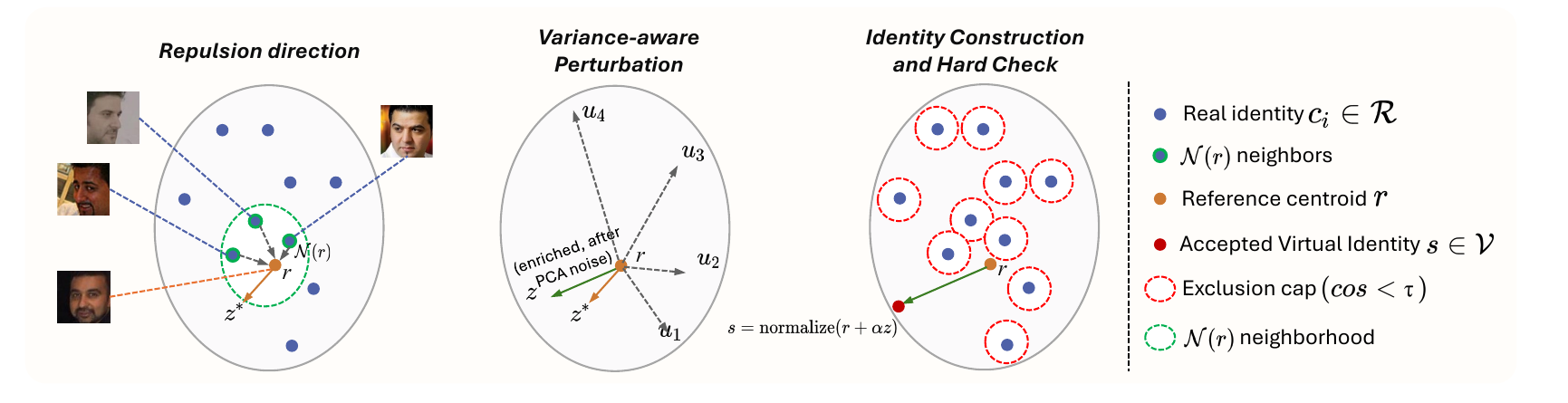}
  \vspace{-7mm}
  \caption{\small \textbf{Repulsion-based virtual identity allocation.} Left: $z^*{=}{-}m/\|m\|_2$ points away from the
weighted centroid of nearest real neighbors $\mathcal{N}(r)$.
Middle: $z^*$ is enriched with PCA-aware noise
along $\{u_k\}$ (weighted by $\sigma_k$) to obtain $z$.
Right: Candidate $s{=}\mathrm{normalize}(r{+}\alpha z)$
is accepted only if $\cos(s,c_i){<}\tau$ for all
$c_i\in\mathcal{R}$ and $\cos(s,v_j){<}\tau$ for all
$v_j\in\mathcal{V}_t$; failed candidates are resampled.}
  \vspace{-3mm}
  \label{fig:allocation}
\end{figure}

\subsection{Geometry of the Real Identity Manifold and Repulsion-Based Allocation}
\label{sec:geometry}

\paragraph{Geometry of the Real Identity Manifold.}
ArcFace ResNet-100~\cite{deng2019arcface} trained on
Glint360K~\cite{an2021partial} is used with $\phi$ ($d{=}512$), and all
$M{=}360{,}232$ identity centroids forming $\mathcal{R}$.
To characterize where virtual identities can be validly placed and
to guide provisioning diversity, we perform PCA on these centroids
in the ambient Euclidean space, yielding eigenvectors
$U = [u_1, \ldots, u_d]$ with eigenvalues
$\lambda_1 \geq \cdots \geq \lambda_d \geq 0$, where
$k \in \{1,\ldots,d\}$ denotes rank in explained variance, and
per-direction standard deviations $\sigma_k = \sqrt{\lambda_k}$;
both $\{u_k\}$ and $\{\sigma_k\}$ are reused in the allocation
strategy below.

\begin{observation}[Low-dimensional Real Identity Manifold]
\label{prop:manifold}
Let the \emph{principal energy} for dimension $k$ be
$E(k) = \bigl(\sum_{k'=1}^{k}\lambda_{k'}\bigr) /
\bigl(\sum_{k'=1}^{d}\lambda_{k'}\bigr)$,
where $k' \in \{1,\ldots,d\}$ is a summation index.
The real identity centroids in $\mathcal{R}$ concentrate over 95\%
of their total variance within the top $k{=}269$ principal components
(Fig.~\ref{fig:pca_appendix} in Appx.~\ref{app:pca}), despite residing in $\mathbb{S}^{511}$.
%
Since $\phi$ is trained on real face images, realizable face
embeddings are expected to remain concentrated near the dominant
real-face subspace.
Low-variance residual directions may exist in the ambient
embedding space, but they are unlikely to provide a reliable
region for high-fidelity face realization.
Valid virtual identities should therefore occupy
\emph{unclaimed gaps within the real face manifold},
rather than a geometrically separate region.
\end{observation}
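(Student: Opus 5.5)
This Observation is not a theorem to be deduced from axioms: it combines an empirical measurement with a heuristic inference. I would therefore support it in two parts. First, verify the quantitative claim $E(269) > 0.95$ directly. Second, argue that the qualitative conclusion follows from that measurement together with Assumption~\ref{ass:encoder}.

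\textbf{Step 1: the energy claim.} I would compute the sample covariance $\Sigma = \frac{1}{M}\sum_{i=1}^{M}(c_i-\bar c)(c_i-\bar c)^\top$ of the $M=360{,}232$ centroids from Eq.~\eqref{eq:centroid}. I would then take its eigendecomposition $\Sigma = U\Lambda U^\top$ and form the cumulative ratio $E(k)$. The claim amounts to $\min\{k : E(k)\ge 0.95\} \le 269$, which is read off Fig.~\ref{fig:pca_appendix}.

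A few routine checks make this robust:
\begin{itemize}
\item $\mathrm{tr}\,\Sigma = 1-\|\bar c\|^2$, because every $c_i$ is unit norm. This normalizes $E$ consistently.
\item The eigenvalues are all nonnegative, so $E$ is monotone and the threshold index is well defined.
\item The index should be stable under subsampling of $\mathcal{R}$. A bootstrap over identities would confirm this.
\end{itemize}

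\textbf{Step 2: from low energy to the location of valid virtual identities.} Let $P_k = U_kU_k^\top$ be the projector onto the top-$k$ subspace. For any realizable embedding $\phi(x)$, I would bound the residual energy $\|(I-P_k)(\phi(x)-\bar c)\|^2$.

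For an embedding drawn from the same distribution as the real centroids, the expectation of this quantity is $\sum_{k'>k}\lambda_{k'} \le 0.05\,\mathrm{tr}\,\Sigma$. By Markov's inequality, only a small fraction of realizable embeddings can place much of their mass in the residual directions. This formalizes ``realizable embeddings stay near the dominant subspace.''

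\textbf{Step 3: ruling out the residual subspace as a home for virtual identities.} Suppose instead that virtual points $v$ were placed mostly along residual directions. Then a generator $G$ trained on real faces would have to produce an image $\tilde x$ with $\phi(\tilde x)\approx v$. However, $\phi$ maps real-looking images only into the high-energy region, which is exactly the contrapositive of Step 2. Such $v$ would therefore fail the realizability requirement.

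Hence any $v$ satisfying both Eq.~\eqref{eq:noncollision} and realizability must lie within the span of the top components, that is, within the real manifold. There it must avoid every cap $\{u : \cos(u,c_i)\ge\tau\}$, which is precisely the statement that it occupies a gap.

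As a consistency check, I would run the Step 1 PCA on accepted virtual embeddings. The paper reports that these also concentrate 95\% of their variance in the same $k=269$ components.

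\textbf{Main obstacle.} The hard part is Step 2. The fact that $\phi$'s image on natural faces stays in the dominant subspace is an empirical premise, not a consequence of the angular-margin loss. I would first try to justify it empirically, by measuring residual energy of $\phi$ on held-out real images and on out-of-distribution images. I would state the conclusion as conditional on that premise rather than claim a proof.
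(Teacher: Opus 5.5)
Your Step 1 is exactly the support the paper gives. The only evidence offered for the Observation is the PCA of the $360{,}232$ centroids (Fig.~\ref{fig:pca_appendix}, Appx.~\ref{app:pca}). Everything after \emph{since $\phi$ is trained on real face images} is asserted as an expectation (\emph{expected}, \emph{unlikely}, \emph{should}), not derived. One caution if you recompute it: your identity $\mathrm{tr}\,\Sigma = 1-\|\bar c\|^2 \le 1$ is incompatible with the reported $\lambda_1 = 3.55$, so the figure must have used some other scaling or preprocessing. A global rescaling leaves $E(k)$ unchanged, but per-coordinate standardization does not, so match the preprocessing before comparing with $269$.

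The genuine gap is in Steps 2--3, which try to turn the heuristic half into a deduction.
(i) The Markov bound applies only to points distributed like the centroids. Assuming that realizable embeddings are so distributed is essentially the conclusion, as your \emph{Main obstacle} paragraph concedes. Moreover, $\Sigma$ is the covariance of normalized identity means, so it does not control the residual energy of a single re-encoded image $\phi(G(s))$, which carries intra-class variation that $\Sigma$ averages out.
(ii) Step 3 is not a contrapositive of Step 2. An in-expectation bound gives at most \emph{residual energy below $t$ with probability at least $1-0.05\,\mathrm{tr}\,\Sigma/t$}. It never gives that $\phi$ maps real-looking images \emph{only} into the high-energy region, nor that every realizable $v$ \emph{must} lie in the span of the top components. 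Exact containment fails even for the real centroids: nearly $5\%$ of their variance lies beyond $k=269$, and $\lambda_{512}>0$.
(iii) Equating the unit sphere of that span with the real manifold identifies $\mathbb{S}^{\tilde d-1}$ with $\mathcal{M}_\text{face}$. The paper uses only the containment $\mathcal{M}_\text{face}\subset\mathbb{S}^{\tilde d-1}$ (Proposition~\ref{prop:capacity}), so lying in the span does not place $v$ on the face manifold.
(iv) Assumption~\ref{ass:encoder} plays no role, as you also concede. The PCA of accepted virtual embeddings is not independent evidence either. Since $s=\mathrm{normalize}(r+\alpha z)$, with $r\in\mathcal{R}$, $z^*$ a combination of centroids, and $\sigma_k$-weighted noise, $s$ largely inherits the real spectrum by construction.

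Drop the \emph{Hence \ldots must} of Step 3 and state the second half as the paper does. It is a hypothesis resting on the empirical premise you identify, supported only indirectly downstream, for example by GapGen's loss of fidelity at large $\alpha$.
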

Observation~\ref{prop:manifold} rules out orthogonal or
free-subspace constructions~\cite{kim2025vigface} and motivates
the following gap-allocation strategy; the theoretical capacity
of the face manifold under this constraint is derived in
Appx.~\ref{app:capacity} (Proposition~\ref{prop:capacity}).


\Paragraph{Repulsion Direction.}
Given a reference centroid $r \in \mathcal{R}$, let
$\mathcal{N}(r) = \{c_{n_1}, \ldots, c_{n_K}\}$ be its $K$ nearest
neighbors in $\mathcal{R}$, with softmax weights:
\begin{equation}
  w_{n_k} = \frac{\exp(-d(r,\, c_{n_k})/t)}
             {\sum_{k'=1}^{K} \exp(-d(r,\, c_{n_{k'}})/t)},
\end{equation}
where $d(r, c) = 1 - \cos(r, c)$ and $t$ denotes temperature.

\begin{remark}[Repulsion Direction]
\label{rem:repulsion}
Define $z^*$ as the negation of the weighted neighborhood centroid:
\begin{equation}
  z^* = -\,\frac{\sum_{k=1}^{K} w_{n_k}\, c_{n_k}}
               {\bigl\|\sum_{k=1}^{K} w_{n_k}\, c_{n_k}\bigr\|_2}.
  \label{eq:zstar}
\end{equation}
$z^*$ serves as a \emph{repulsive heuristic}: it moves
candidates away from the weighted neighborhood centroid of $r$,
improving the probability of a candidate passing the checks
of Eqs.~\eqref{eq:hardcheck1}--\eqref{eq:hardcheck2}.
Non-collision is guaranteed exclusively by the checks;
see Appx.~\ref{app:capacity} for an  analysis of $z^*$
after spherical normalization.
\end{remark}

To improve diversity while keeping provisioned identities on the
real face manifold, we enrich $z^*$ with \emph{PCA-aware noise},
random perturbations aligned to the principal directions $\{u_k\}$:
\begin{equation}
  z = \mathrm{normalize}\!\left(
      z^* + \kappa \sum_{k=1}^{d} \eta_k \sigma_k u_k\right),
  \quad \eta_k \sim \mathcal{N}(0,1),
  \label{eq:perturbation}
\end{equation}
where $\kappa \geq 0$ balances direction fidelity against sample
diversity.
Weighting by $\sigma_k$ ensures perturbations are larger along
high-variance principal directions and smaller along low-variance
ones, keeping provisioned embeddings aligned with the natural
directions of variation in $\mathcal{R}$.
The relationship between $\kappa$, $\alpha$, and the
theoretical face manifold capacity is analyzed in
Appx.~\ref{app:capacity} (Sec.~\ref{app:alpha_kappa}).

\Paragraph{Identity Construction and Hard Check.}
A candidate virtual identity is constructed as:
\begin{equation}
  s = \mathrm{normalize}(r + \alpha z),
  \label{eq:candidate}
\end{equation}
where $r \in \mathcal{R}$ is the reference centroid,
$z$ is from Eq.~\eqref{eq:perturbation}, and $\alpha > 0$ controls
perturbation strength.

\begin{lemma}[Effect of Perturbation Strength]
\label{lem:displacement}
For any $r, z \in \mathbb{S}^{d-1}$, the cosine similarity between
$s = \mathrm{normalize}(r + \alpha z)$ and $r$ satisfies:
\begin{equation}
  \cos(s,\, r) = \frac{1 + \alpha\,(r \cdot z)}
                      {\sqrt{1 + 2\alpha\,(r \cdot z) + \alpha^2}}.
  \label{eq:cosgeneral}
\end{equation}
In the special case $\kappa{=}0$, $z = z^*$; as $r$ is positively
correlated with its nearest neighbors $\mathcal{N}(r)$ on
$\mathbb{S}^{d-1}$, the weighted centroid $\sum_k w_{n_k} c_{n_k}$
is positively aligned with $r$, giving $r \cdot z^* < 0$, and
$\cos(s, r)$ is strictly decreasing in $\alpha$.
In the orthogonal case $r \perp z$, Eq.~\eqref{eq:cosgeneral}
simplifies to $\cos(s, r) = 1/\sqrt{1+\alpha^2}$.
\end{lemma}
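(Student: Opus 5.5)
The plan is a direct computation, followed by a monotonicity argument for the case $\kappa{=}0$. Set $\rho = r\cdot z \in [-1,1]$. Since $\|r\|=\|z\|=1$, expanding gives $\|r+\alpha z\|_2^2 = 1 + 2\alpha\rho + \alpha^2$. This is strictly positive unless $\rho=-1$ and $\alpha=1$, and we exclude that degenerate case, where $s$ is undefined. Then $\cos(s,r) = s\cdot r = (r+\alpha z)\cdot r/\|r+\alpha z\|_2 = (1+\alpha\rho)/\sqrt{1+2\alpha\rho+\alpha^2}$, which is Eq.~\eqref{eq:cosgeneral}. The orthogonal case is immediate: substituting $\rho=0$ gives $1/\sqrt{1+\alpha^2}$.

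For $\kappa=0$, Eq.~\eqref{eq:perturbation} gives $z=z^*$. I first need $\rho = r\cdot z^* < 0$. By Eq.~\eqref{eq:zstar}, $r\cdot z^* = -\sum_k w_{n_k}(r\cdot c_{n_k})/\|\sum_k w_{n_k}c_{n_k}\|_2$. The weights are positive and sum to one. The hypothesis that $r$ is positively correlated with its nearest neighbors means $r\cdot c_{n_k}>0$ for each $k$. The numerator sum is therefore positive, so $\rho<0$. This sign condition is an assumption on the gallery rather than something forced by the definitions, and I will state it explicitly as such.

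Next I differentiate $f(\alpha) = (1+\alpha\rho)(1+2\alpha\rho+\alpha^2)^{-1/2}$. Writing $D = 1+2\alpha\rho+\alpha^2$, we get
\begin{equation*}
f'(\alpha) = \frac{\rho D - (1+\alpha\rho)(\rho+\alpha)}{D^{3/2}} = \frac{\alpha(\rho^2-1)}{D^{3/2}}.
\end{equation*}
The numerator simplifies as follows: $\rho+2\alpha\rho^2+\alpha^2\rho - \rho - \alpha - \alpha\rho^2 - \alpha^2\rho = \alpha\rho^2-\alpha$. Hence for $\alpha>0$ and $|\rho|<1$ we have $f'(\alpha)<0$, so $f$ is strictly decreasing.

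The main obstacle is the edge cases, not the algebra. If $\rho=-1$, then $z=-r$ and $f(\alpha)=\operatorname{sign}(1-\alpha)$, which is not strictly decreasing and is undefined at $\alpha=1$. I will therefore add the mild hypothesis $z^*\neq -r$, i.e.\ $|\rho|<1$. This holds generically, for instance whenever the neighbors are not all parallel to $r$.

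Finally, I note that the derivative formula shows monotone decrease for every $|\rho|<1$, not only $\rho<0$. The role of $\rho<0$ is that $\cos(s,r)$ drops faster and can go below zero for $\alpha>1/|\rho|$, i.e.\ the candidate moves to the far side of $r$. I will record this as a remark.
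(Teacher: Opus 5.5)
Your proof is correct and follows the paper's argument. The norm expansion gives Eq.~\eqref{eq:cosgeneral}, and the sign of $r\cdot z^*$ comes from $r\cdot m>0$ just as the lemma asserts. Your derivative $\alpha(\rho^2-1)/D^{3/2}$ is the same one computed in the proof of Corollary~\ref{cor:alpha}, which also imposes $|p|<1$, so your explicit exclusion of $z^*=-r$ is appropriate.

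One side remark is wrong, though it does not affect the proof because you state $|\rho|<1$ as a hypothesis. Neighbors that are not all parallel to $r$ do not guarantee $z^*\neq -r$: two equally weighted neighbors placed symmetrically about $r$ make $m=\sum_k w_{n_k}c_{n_k}$ a positive multiple of $r$. The correct condition is simply that $m$ is not a positive multiple of $r$.
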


The parameter $\alpha$ controls the fundamental trade-off in BIP, as quantified in Tab.~\ref{tab:alpha_threshold_main}:
larger $\alpha$ increases angular displacement from real identity
territories improving non-collision, while smaller $\alpha$ keeps
provisioned identities closer to the face manifold improving
realizability. 
Each candidate $s$ is accepted only when it satisfies the BIP
constraints of Definition~\ref{def:bip} with respect to the
current state of the gallery:
\begin{align}
  \cos(s,\, c_i) &< \tau
  \quad \forall\, i \in \{1,\ldots,M\}
  \quad \text{(non-collision with } \mathcal{R}\text{)},
  \label{eq:hardcheck1}\\
  \cos(s,\, v_j) &< \tau
  \quad \forall\, v_j \in \mathcal{V}_t
  \quad \text{(separability within } \mathcal{V}_t\text{)},
  \label{eq:hardcheck2}
\end{align}
where $\mathcal{V}_t$ denotes the set of accepted virtual identities
at step $t$; failed candidates are resampled with fresh $\eta$.
%
By induction, if every candidate passes these checks before
acceptance, the final $\mathcal{V}$ satisfies
Eqs.~\eqref{eq:noncollision}--\eqref{eq:separability} by
construction.
Appx.~\ref{app:capacity} (Corollary~\ref{cor:alpha}) establishes
that $\alpha > \sqrt{1-\tau^2}/\tau$ is sufficient for non-collision
in the repulsive case; the full $\alpha$ ablation is in
Tab.~\ref{tab:alpha_threshold_main}.
The theoretical capacity of the face manifold vastly exceeds any
foreseeable provisioning count; we demonstrate 10M
non-colliding virtual identity embeddings against
$|\mathcal{R}|{=}360$K with no observed collision at $\tau=0.391$.
For open-world robustness against subsequently enrolled real
identities, a safety buffer $\tau_\text{safe} = \tau - \Delta$
can be applied at provisioning time;
Appx.~\ref{app:capacity} (Proposition~\ref{prop:buffer})
formalizes this guarantee. Fig.~\ref{fig:allocation} shows the three components: repulsion direction, PCA-aware perturbation, and hard non-collision check.

\subsection{Identity Realization via GapGen}
\label{sec:realization}

A provisioned embedding $s \in \mathcal{V}$ defines a target location
in the biometric space; \textbf{GapGen} realizes it as a face image
$\tilde{x}$ such that $\phi(\tilde{x})$ remains faithful to $s$.
The challenge is that, since $s$ is intentionally displaced from the real
identity distribution, a pretrained generator $G$, optimized on
real embeddings, exhibits reduced fidelity when conditioned on such
out-of-distribution inputs.

\Paragraph{Identity-conditioned Generator.}
We build on InstantID~\cite{wang2024instantid} as the
identity-conditioned generator $G$, which accepts an ArcFace
embedding as conditions and generates $1024{\times}1024$ face
images, denoted as $\tilde{x} = G(s)$.
Since BIP uses the same ArcFace embedding space for both allocation
and generation, provisioned embeddings $s \in \mathcal{V}$ can be
passed directly as conditions without additional projection. Full implementation details are in Appx.~\ref{app:gapgen}.

\Paragraph{GapGen Fine-tuning.}
We fine-tune $G$ with a curriculum that interleaves \emph{real steps}
and \emph{virtual steps} at mixing ratio $\beta \in [0,1]$.
Both step types share the round-trip identity loss:
\begin{equation}
  \mathcal{L}_{\mathrm{RT}}(\tilde{x},\, e^*) =
  1 - \cos\!\left(\phi(\tilde{x}),\, e^*\right),
  \label{eq:lrt}
\end{equation}
where $e^*$ is the target identity embedding and
$\rho_{\mathrm{RT}} = \cos(\phi(\tilde{x}), e^*)$ is the
\emph{round-trip similarity} quantifying rendering fidelity.

\emph{Real steps} train $G$ on real images $x$ from
FFHQ~\cite{karras2019style}, chosen for its $1024{\times}1024$
resolution consistent with our generation target, with target
$e^* = \phi(x)$:
\begin{equation}
  \mathcal{L}_{\mathrm{real}} =
  \mathcal{L}_{\mathrm{denoise}} +
  \lambda_{\mathrm{id}}\,\mathcal{L}_{\mathrm{RT}}(\tilde{x},\,\phi(x)) +
  \lambda_{\mathrm{perc}}\,\mathcal{L}_{\mathrm{perc}},
  \label{eq:lreal}
\end{equation}
where $\mathcal{L}_{\mathrm{denoise}} =
\mathbb{E}_{t,\epsilon}[\|\epsilon - \epsilon_\theta(x_t,t,\phi(x))\|^2]$
is the standard noise prediction loss~\cite{ho2020ddpm}, and
$\mathcal{L}_{\mathrm{perc}}$ is a perceptual similarity
loss~\cite{zhang2018unreasonable}.
Real steps adapt $G$ to the high-resolution portrait domain and
preserve identity fidelity on in-distribution embeddings.

\emph{Virtual steps} address the core challenge of BIP realization:
provisioned embeddings $s \in \mathcal{V}$ have no paired
ground-truth images, as virtual identities have no physical
counterpart.
At each virtual step, we sample a reference centroid $r \in \mathcal{R}$,
draw $\alpha \sim \mathrm{Uniform}(\alpha_{\min}{=}2, \alpha_{\max}{=}5)$,
construct $s = \mathrm{normalize}(r + \alpha z)$ on-the-fly,
generate $\tilde{x} = G(s)$ via truncated DDIM sampling, 
and optimize $\mathcal{L}_\text{RT}(\tilde{x}, s)$
(Eq.~\eqref{eq:lrt} with $e^*{=}s$).
%
This unsupervised objective drives $G$ to extend its generative
capability into the non-collision region of the embedding space,
\emph{without using paired training data}.


\Paragraph{Intra-class Variation.}
Multiple images per virtual identity are generated by varying facial
pose, lighting conditions, photographic style, and background through
different pose configurations and text style prompts, producing
within-identity variation comparable to unconstrained portrait
photography.

\subsection{Unified Recognition and Real-vs-Virtual Detection}
\label{sec:iapct}
The BIP criteria of Sec.~\ref{sec:formalization} evaluate
virtual identities: whether they are
non-colliding, separable, and realizable.
Deploying digital entities alongside real humans raises a
complementary set of questions that no existing benchmark addresses:
can virtual identities be verified against themselves?
How do they interact with real identities under standard recognition
protocols?
Can a deployed system reliably distinguish real from virtual?
We construct \textbf{v-LFW} to open this evaluation frontier.

\Paragraph{Benchmark Construction.}
We construct \textbf{v-LFW} to mirror LFW~\cite{huang2007lfw}
in both identity count (5{,}749) and image count (13{,}233),
supporting controlled evaluation under established face recognition
protocols.
Each virtual identity is provisioned against $\mathcal{R}$ augmented
with all LFW identities, ensuring non-collision with the benchmark.
Images are rendered at $1024{\times}1024$ with diverse pose and
style variation, then filtered for face quality.
We define four protocols: virtual face verification, cross-reality
matching, real-vs-virtual detection, and unified recognition/detection.

\Paragraph{IAPCT: A Diagnostic Evaluation Tool.}
To support the real-vs-virtual detection and unified recognition
protocols of v-LFW, we provide \textbf{IAPCT} (Identity-Anchored
Patch Consistency Transformer) as a lightweight diagnostic tool
built on the frozen encoder $\phi$; it is not a primary
contribution of this work.
Identity-conditioned generators match the target embedding at the
output level but impose no constraint on intermediate feature maps,
creating detectable inconsistencies between local patch statistics
and the global identity embedding $e = \phi(x)$ that IAPCT exploits.
Let $e = \phi(x) \in \mathbb{S}^{d-1}$ denote the identity
embedding from the frozen backbone.
Multi-scale spatial tokens $\{t_{l,p}\}$ are extracted from four
intermediate feature maps $\{F_l\}_{l=1}^{4}$ and projected to
$d_{\mathrm{model}}{=}256$ via per-stage linear layers.
The projected identity anchor $e_{\mathrm{proj}} = W_{\mathrm{id}}\,e$
serves as the cross-attention key in a 4-layer transformer
encoder~\cite{vaswani2017attention,dosovitskiy2021vit}; each layer applies
self-attention among spatial tokens followed by cross-attention
to $e_{\mathrm{proj}}$, producing patch-identity consistency scores:
\begin{equation}
  \gamma_{l,p} = \sigma\!\left(
  \frac{(W_Q\,t_{l,p}) \cdot (W_K\,e_{\mathrm{proj}})}{\sqrt{d_k}}
  \right) \in (0,1),
  \label{eq:gamma}
\end{equation}
where $\sigma$ is the sigmoid function and $\gamma_{l,p}$ measures
the consistency of patch $p$ at stage $l$ with the claimed identity
$e$.
Real faces yield uniformly high $\gamma_{l,p}$ since every spatial
region originates from the same physical person; virtual identity
images yield heterogeneous $\gamma_{l,p}$ due to unconstrained
intermediate features.
A \texttt{[CLS]} token produces $\hat{y}$ via an MLP head with
loss $\mathcal{L} {=} \mathrm{BCE}(\hat{y},y) {+}
\lambda_c(H(\gamma^\text{virtual}) {-} H(\gamma^\text{real}))$,
where $H(\gamma^{(x)}) {=} {-}\sum_{l,p}\bar{\gamma}_{l,p}^{(x)}
\log\bar{\gamma}_{l,p}^{(x)}$ is the entropy of the normalized
attention distribution and $\phi$ remains frozen.
The identity embedding $e{=}\phi(x)$ simultaneously serves face
verification at no additional backbone cost.
Full details are in Appx.~\ref{app:iapct}.

\section{Experiments}

\label{sec:exp}

\Paragraph{Baselines.}
We compare BIP with representative synthetic face generation
methods:
\emph{DCFace}~\cite{kim2023dcface}, a dual-condition diffusion
model optimized for inter-class separability among synthetics;
\emph{Arc2Face}~\cite{papantoniou2024arc2face}, which conditions
generation on real ArcFace embeddings;
and \emph{Vec2Face}~\cite{wu2024vec2face}, which synthesizes
identity-consistent faces from target embeddings with explicit
separability control.
All baselines are designed for generating face recognition training data,
not biometric identity provisioning.

\Paragraph{BIP Configuration.}
We use ArcFace ResNet-100~\cite{deng2019arcface} trained on
Glint360K~\cite{an2021partial} as $\phi$ ($d{=}512$,
$M{=}360{,}232$).
Unless otherwise stated, we set $\tau{=}0.391$
(FAR$\approx2\times10^{-5}$ on IJB-B; Appx.~\ref{app:threshold}),
$\alpha{=}4.0$, $K{=}10$ nearest neighbors, temperature $t{=}0.1$,
and PCA noise weight $\kappa{=}1.0$.
GapGen is fine-tuned from InstantID~\cite{wang2024instantid}
with mixing ratio $\beta{=}0.2$, $\lambda_{\mathrm{id}}{=}0.1$, $\lambda_{\mathrm{perc}}{=}0.1$, generating faces of
$1024{\times}1024$ pixels.
The large-scale setting provisions \textbf{\emph{10 million}} virtual identities; image realization is validated at \textbf{\emph{1 million}} identities.
Additional details are in the corresponding sections of the Appx.

\begin{table}[t]
\centering
\small
\setlength{\tabcolsep}{5pt}
\renewcommand{\arraystretch}{0.90}
\caption{\textbf{BIP Non-Collision / Inter-Class Sep (\%) across
identity scale $|\mathcal{V}|$, perturbation strength $\alpha$,
and threshold $\tau$.}
At the operating point $\tau{=}0.391$, $\alpha{\geq}4$
achieves 100\%/100\% values at both 100K and 10M scale.}
\label{tab:alpha_threshold_main}
\resizebox{0.85\textwidth}{!}{%
\begin{tabular}{c|c|cccccc}
\toprule
$|\mathcal{V}|$ & $\alpha$ & $\tau=0.448$  & $\tau=\textbf{0.391}$  & $\tau=0.360$   &  $\tau=0.341$  &  $\tau=0.330$  & $\tau=0.319$   \\
\midrule
\multirow{4}{*}{100K}
& 2 & $100.00 / 100.00$ & $85.63 / 100.00$  & $64.57 / 99.99$ & $48.69 / 99.93$ & $39.05 / 99.81$ & $30.55 / 99.54$ \\
& 3 & $100.00 / 100.00$ & $99.98 / 100.00$  & $99.80 / 99.99$ & $98.96 / 99.93$ & $97.46 / 99.81$ & $94.28 / 99.55$ \\
& 4 & $100.00 / 100.00$ & $\textbf{100.00 / 100.00}$ & $99.91 / 99.99$ & $99.39 / 99.94$ & $98.19 / 99.82$ & $95.55 / 99.54$ \\
& 5 & $100.00 / 100.00$ & $100.00 / 100.00$ & $99.91 / 99.99$ & $99.40 / 99.94$ & $98.23 / 99.81$ & $95.62 / 99.53$ \\
\midrule
\multirow{4}{*}{1M}
& 2 & $100.00 / 100.00$ & $85.61 / 100.00$  & $64.48 / 99.92$ & $48.51 / 99.42$ & $39.01 / 98.24$ & $30.42 / 95.48$ \\
& 3 & $100.00 / 100.00$ & $99.98 / 100.00$  & $99.78 / 99.93$ & $98.94 / 99.43$ & $97.42 / 98.28$ & $94.20 / 95.54$ \\
& 4 & $100.00 / 100.00$ & $\textbf{100.00 / 100.00}$ & $99.91 / 99.92$ & $99.38 / 99.43$ & $98.17 / 98.27$ & $95.52 / 95.53$ \\
& 5 & $100.00 / 100.00$ & $100.00 / 100.00$ & $99.91 / 99.92$ & $99.38 / 99.43$ & $98.22 / 98.27$ & $95.53 / 95.54$ \\
\midrule
\multirow{4}{*}{10M}
& 2 & $100.00 / 100.00$ & $85.10 / 99.97$  & $64.74 / 99.22$ & $48.06 / 94.51$ & $39.00 / 85.55$ & $30.16 / 66.62$ \\
& 3 & $100.00 / 100.00$ & $99.98 / 99.98$  & $99.77 / 99.21$ & $98.99 / 94.67$ & $97.39 / 85.50$ & $94.16 / 67.06$ \\
& 4 & $100.00 / 100.00$ & $\textbf{100.00 / 99.98}$ & $99.91 / 99.24$ & $99.39 / 94.66$ & $98.27 / 85.57$ & $95.36 / 66.99$ \\
& 5 & $100.00 / 100.00$ & $100.00 / 99.98$ & $99.91 / 99.23$ & $99.40 / 94.64$ & $98.28 / 85.53$ & $95.38 / 66.94$ \\

\bottomrule
\end{tabular}}
\vspace{-5mm}
\end{table}

\begin{table}[t]
\centering
\small
\setlength{\tabcolsep}{4pt}
\renewcommand{\arraystretch}{0.9}
\caption{\textbf{Synthetic face generation under BIP criteria.}
All methods are re-encoded by the same frozen ArcFace, tested
against $\mathcal{R}$ ($M{=}360$K) at $\tau{=}0.391$.
FID measures perceptual image quality against
FFHQ. RT pass rate reports the fraction of generated images
whose re-encoded embedding satisfies $\rho_\text{RT}{>}0.6$,
quantifying how faithfully the generator realizes the target
identity embedding. 
}
\label{tab:main}
\resizebox{0.85\linewidth}{!}{%
\begin{tabular}{lcccccc}
\toprule
Method & \#Identity & Non-Collision~$\uparrow$ & Inter-Sep~$\uparrow$ &
FID~$\downarrow$ & RT pass rate~$\uparrow$ & Resolution \\
\midrule
CAS.\ (Real)~\cite{yi2014learning} & 10.6K
& $0.32$ & $90.08$ & $131.70$ &   - & $250{\times}250$ \\
\midrule
DCFace~\cite{kim2023dcface} & 10K
& $72.38$ & $49.20$ & $139.70$  &    - & $112{\times}112$ \\
Arc2Face~\cite{papantoniou2024arc2face} & 10K
& $93.90$ & $88.67$ & $74.11$  & $0.00$  & $512{\times}512$ \\
Vec2Face~\cite{wu2024vec2face} & 10K
& $53.92$  & $65.02$ & $159.30$  &  $62.78$ & $112{\times}112$ \\
Vec2Face~\cite{wu2024vec2face} & 100K
& $52.95$ & $45.44$ & $159.18$   &  $61.43  $ & $112{\times}112$ \\
Vec2Face~\cite{wu2024vec2face} & 500K
& $52.93$ & $36.22$ & $159.29$   & $62.01$  & $112{\times}112$ \\
\midrule
\textbf{BIP+GapGen (ours)} & 10K
& $\mathbf{98.43}$ & $\mathbf{99.57}$ & $56.20$   &  $87.94$  & $1024{\times}1024$ \\
\textbf{BIP+GapGen (ours)} & 100K
& $98.38$ & $97.84$ & $\mathbf{54.63}$   &  $\textbf{89.42}$ & $1024{\times}1024$ \\
\textbf{BIP+GapGen (ours)} & 500K
& $98.24$ & $86.30$ & $55.62$ &  $88.31$ & $1024{\times}1024$ \\
\textbf{BIP+GapGen (ours)} & 1M
& $98.07$ & $78.36$ & $56.47$  &  $88.75$ & $1024{\times}1024$ \\
\bottomrule
\end{tabular}}
\vspace{-4mm}
\end{table}

\begin{figure}[ht!]
  \centering
  \includegraphics[width=\linewidth]{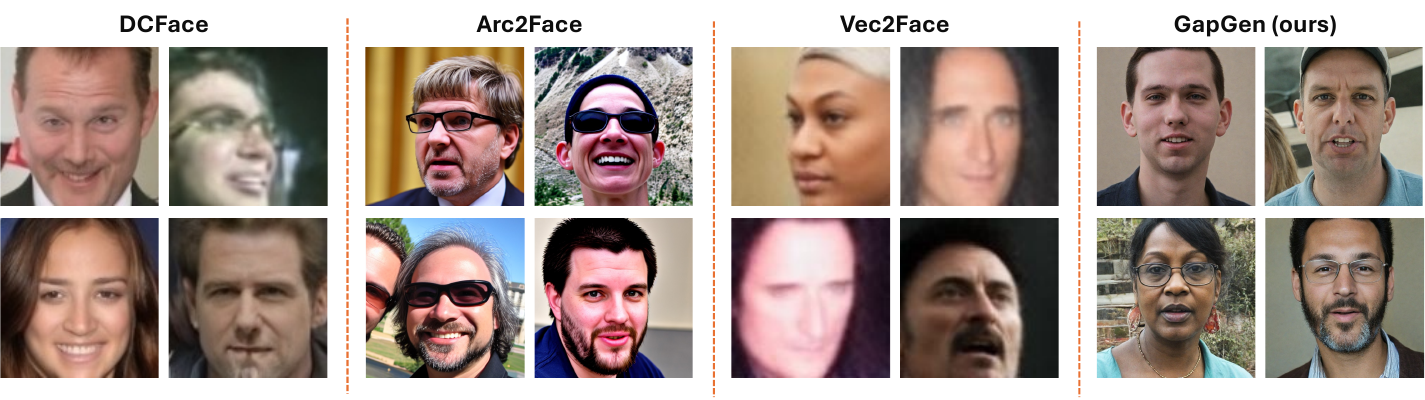}
\vspace{-8mm}
\caption{Qualitative comparison with synthetic face generators.
Four randomly sampled identities per method.
GapGen produces photorealistic $1024{\times}1024$ portrait faces
with natural texture and coherent lighting, free from blurring,
distortion, and uncontrolled backgrounds of the baselines.}
\label{fig:qualitative}
\vspace{-5mm}
\end{figure}


\Paragraph{Geometry and Scale: Non-Collision Holds at Ten Million.}
%
%
%
Tab.~\ref{tab:alpha_threshold_main} jointly ablates perturbation
strength $\alpha$, provisioning scale $|\mathcal{V}|$, and
verification threshold $\tau$ across the full IJB-B operating range
(see Fig.~\ref{fig:roc} in Appx.~\ref{app:threshold}).
At the primary threshold $\tau{=}0.391$ and $\alpha{\geq}4$,
both Non-Collision and Inter-Sep reach \textbf{100\%/100\% at
100K, 1M, and 10M scale} --- ten million non-colliding virtual
identities provisioned against 360K real identities with zero
observed collision --- consistent with the theoretical sufficient
bound $\alpha^*(0,0.391){\approx}2.35$ (Corollary~\ref{cor:alpha}).
$\alpha{=}2$ falls below this threshold and yields only $85.6\%$
Non-Collision, while $\alpha{\geq}3$ already recovers near-perfect
performance, confirming the theoretical prediction.
Scaling from 100K to 10M incurs negligible cost: Non-Collision
remains at 100\% throughout, and Inter-Sep stays above $99.98\%$
at $\tau{=}0.391$, demonstrating that BIP allocation is not
capacity-limited at any scale tested.
At stricter thresholds (smaller $\tau$), Inter-Sep degrades more
noticeably at 10M (\emph{e.g.}, $66.99\%$ at $\tau{=}0.319$), consistent
with the tighter capacity headroom at extreme operating points
(Appx.~\ref{app:capacity}); $\tau{=}0.391$ remains robust
across all scales.

\Paragraph{Realization Quality: GapGen Outperforms Baselines across All BIP Criteria.}
Tab.~\ref{tab:main} compares BIP+GapGen against baselines under the
three BIP criteria at $\tau{=}0.391$.
While BIP allocation scales to 10M embeddings, image realization
is validated up to 1M virtual images due to the computational cost
of $1024{\times}1024$ generation; this reflects a
deliberate fidelity-scale trade-off rather than an allocation
bottleneck.
Existing methods fail on at least one criterion:
DCFace~\cite{kim2023dcface} reaches only $49.20\%$ Inter-Sep;
Vec2Face~\cite{wu2024vec2face} (reproduced from released code)
achieves only $53.92\%$ Non-Collision at 10K with Inter-Sep
collapsing to $36.22\%$ at 500K;
Arc2Face~\cite{papantoniou2024arc2face} achieves $93.90\%$
Non-Collision but at $\rho_\text{RT}{=}0.00$, indicating
complete failure of embedding fidelity after re-encoding.
BIP+GapGen achieves $98.43\%$ Non-Collision, $99.57\%$
Inter-Sep, FID $56.20$, and round-trip pass rate $87.94\%$
at 10K, the best across all criteria at $9{\times}$ higher
resolution than baselines.
As scale increases, Non-Collision remains stable ($98.07\%$
at 1M); Inter-Sep decreases to $78.36\%$, reflecting the
realization gap between embedding-space allocation
(Tab.~\ref{tab:alpha_threshold_main}, Inter-Sep $\approx100\%$
by construction) and image-level realization at scale.
Fig.~\ref{fig:qualitative} confirms the qualitative advantage;
additional visual results are in Appx.~\ref{app:gapgen}.

\Paragraph{Deepfake Detectability of Virtual Faces.}
\label{sec:safety}
A concern with high-fidelity virtual face generation is whether
provisioned identities could be misused as deepfakes or synthetic
personas.
We evaluate this by applying five SoTA deepfake detectors to our
generated virtual faces zero-shot, without fine-tuning on
virtual faces; all detectors are trained on
AIFaceFairnessBench~\cite{lin2025aiface}.
As in Tab.~\ref{tab:safety}, all detectors achieve
high accuracy on BIP-generated faces, confirming that existing
safety infrastructure already covers provisioned virtual identities
without modification.
The non-collision guarantee and deepfake detectability are
complementary: the former ensures virtual identities are
biometrically distinct from all real humans, while the
latter confirms they remain within the reach of existing forensic
systems.

\begin{wrapfigure}[11]{r}{0.38\linewidth}
\vspace{-10pt}
\centering
\includegraphics[width=\linewidth]{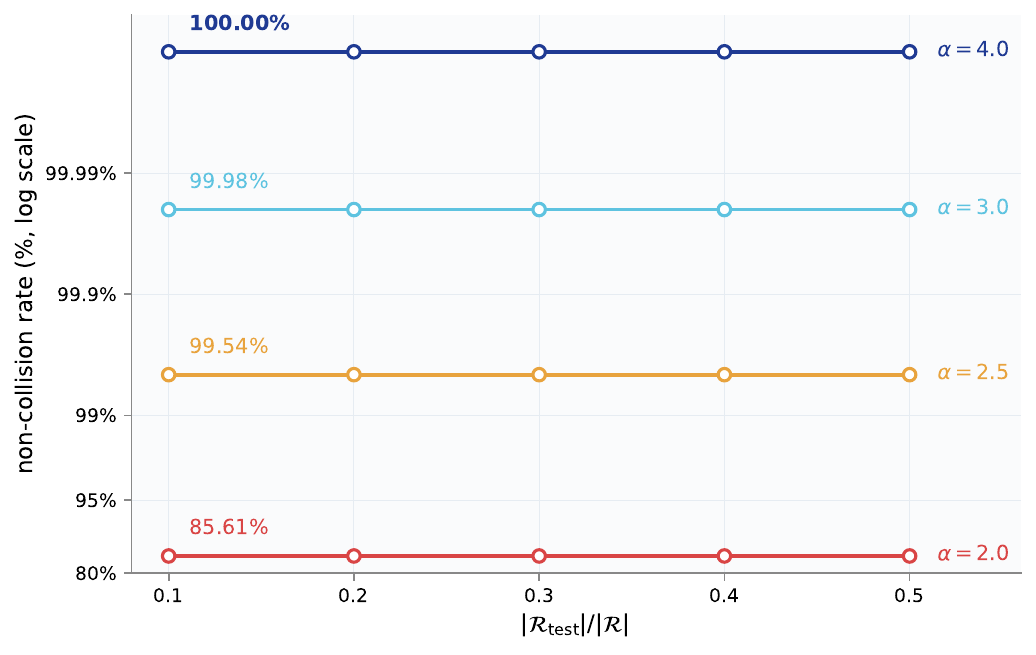}
\vspace{-6mm}
\caption{\small Open-world non-collision. At $\alpha{\geq}3.0$,
collision is near zero in all scales.}
\label{fig:openworld}
\vspace{-12pt}
\end{wrapfigure}
\Paragraph{Open-World Non-Collision as Real Galleries Grow.}
\label{sec:open-world}
The hard checks guarantee non-collision against $\mathcal{R}$
by construction, but provide no guarantee against real identities
enrolled after provisioning.
To stress-test this, we use $180$K held-out identities from
WebFace4M~\cite{zhu2021webface260m} (disjoint from $\mathcal{R}$)
and fix $|\mathcal{V}|$ at one million virtual identities, measuring
the per-pair collision rate $C/(N{\times}L)$ as
$|\mathcal{R}_\text{test}|/|\mathcal{R}|$ grows from 0.1 to 0.5
($|\mathcal{R}|{=}360$K).
Fig.~\ref{fig:openworld} shows flat curves across all $\alpha$
values at $\tau{=}0.391$, confirming $p_\text{coll}$ is a stable geometric property
independent of test-gallery scale (Appx.~\ref{app:capacity}).
At our default $\alpha{=}4.0$, collision rate is stable near zero,
demonstrating genuine open-world non-collision beyond $\mathcal{R}$.



\Paragraph{Cross-Reality Recognition on v-LFW Dataset.}
\label{sec:vlfw_eval}
A deployed system serving both real humans and digital entities
must simultaneously answer two questions for any face presented
to it: \emph{who is this?} and \emph{is this a real  or
 virtual identity?}
v-LFW is the first benchmark to answer both questions jointly,
mirroring LFW in identity count ($5{,}749$) and image count
($13{,}233$) and enabling five protocols on the combined
LFW$+$v-LFW set.
%
%
\textbf{R-R} and \textbf{V-V} follow the standard LFW verification
protocol; \textbf{R-V} pairs are exclusively impostor pairs by BIP
construction; \textbf{Detection} distinguishes real from virtual
without identity labels; \textbf{Unified} simultaneously produces
a verified identity and a real-vs-virtual decision.
As shown in Tab.~\ref{tab:vlfw}, BIP + IAPCT is the only
method supporting all five protocols.
R-R Acc $99.80\%$ confirms that IAPCT leaves
recognition performance fully intact.
R-V FAR $\approx 0\%$ validates the non-collision guarantee
end-to-end at image level, and Detection AUC $98.13\%$
confirms real and virtual populations are reliably separable
without modifying the frozen backbone.

\begin{table}[t]
\centering
\footnotesize
\caption{\small 
\textbf{(a)} Five SoTA deepfake detectors~\cite{lin2025aiface} applied zero-shot to BIP-generated virtual faces.
\textbf{(b)} v-LFW evaluation on the combined LFW$+$v-LFW set. R-V pairs are exclusively impostor pairs by BIP construction. Unified denotes joint recognition and detection on the combined set. ``--'': task not supported by the method.}
\label{tab:safety_vlfw}
\vspace{1mm}

\begin{minipage}[t]{0.30\textwidth}
\centering
\phantomsubcaption\label{tab:safety}
\footnotesize
\setlength{\tabcolsep}{5pt}
\renewcommand{\arraystretch}{0.95}
\resizebox{0.75\linewidth}{!}{%
\begin{tabular}{lc}
\toprule
Detector & ACC~$\uparrow$ \\
\midrule
SPSL~\cite{liu2021spatial} & 87.76 \\
UCF~\cite{yan2023ucf} & 93.14 \\
Xception~\cite{rossler2019faceforensics++} & 95.13 \\
EfficientB4~\cite{tan2019efficientnet} & 97.13 \\
PG-FDD~\cite{lin2024preserving} & \textbf{99.57} \\
\bottomrule
\end{tabular}}
\vspace{0.5mm}

\textbf{(\subref{tab:safety})}
\end{minipage}
\hfill
\begin{minipage}[t]{0.68\textwidth}
\centering
\phantomsubcaption\label{tab:vlfw}
\footnotesize
\setlength{\tabcolsep}{5pt}
\renewcommand{\arraystretch}{1.00}
\resizebox{0.99\linewidth}{!}{%
\begin{tabular}{lccc}
\toprule
Metric & DCFace~\cite{kim2023dcface}+AdaFace & Vec2Face~\cite{wu2024vec2face}+ArcFace & BIP+IAPCT (Ours) \\
\midrule
R-R Acc $\uparrow$   & 98.58 & 98.87 & \textbf{99.80} \\
V-V Acc $\uparrow$   & 98.85 & 99.04 & \textbf{99.18} \\
R-V FAR $\downarrow$ & 0.05 & 0.02 & \textbf{0.01} \\
Det. AUC $\uparrow$  & --    & --    & \textbf{98.13} \\
Unified $\uparrow$   & --    & --    & \textbf{99.20} \\
\bottomrule
\end{tabular}}
\vspace{0.5mm}

\textbf{(\subref{tab:vlfw})}
\end{minipage}
\vspace{-6mm}
\end{table}



\section{Conclusion}
\label{sec:conclu}

We introduced Biometric Identity Provisioning (BIP), the first
formal framework for allocating non-colliding biometric identities
to digital entities at scale.
The key geometric insight is that valid virtual identities must
occupy unclaimed gaps within the real face manifold rather than
a separate free subspace. This  finding shapes repulsion-based
allocation, gap-aware realization via GapGen, and the capacity
analysis establishing that the face manifold supports vastly more
non-colliding positions than any foreseeable enrollment scale.
Against a gallery of 360K real identities, 10M
non-colliding virtual identity embeddings are provisioned with
no observed collision and realized as high-fidelity face images with end-to-end non-collision
verified by re-encoding.
Our v-LFW dataset benchmarks real-virtual coexistence; R-V FAR $\approx 0\%$
confirms the BIP guarantee at image level, and zero-shot deepfake
detectability confirms no new forensic risks are introduced.

\Paragraph{Limitations.}
BIP guarantees non-collision against $\mathcal{R}$ at provisioning
time; protection against subsequently enrolled identities requires
the safety buffer (Appx.~\ref{app:capacity}) and periodic
revocation.
Capacity bounds rest on a spherical submanifold approximation
rather than a precise characterisation of the face manifold
(Appx.~\ref{app:capacity}).
GapGen fidelity degrades at large $\alpha$, and all experiments
use ArcFace ResNet-100; generalization to other encoders remains
to be evaluated.


\newpage
\bibliographystyle{plain}
\bibliography{refs}


\clearpage
\appendix

\section*{Appendix}

This appendix provides full details supporting the main paper.

\begin{itemize}

  \item Sec.~\ref{app:broader}: Broader Impact and Ethics.

  \item Sec.~\ref{app:threshold}: Verification threshold selection
        and IJB-B operating points.

  \item Sec.~\ref{app:capacity}: Capacity bound derivations.
  \begin{itemize}
    \item Sec.~\ref{app:pca}: PCA analysis and submanifold model.
    \item Sec.~\ref{app:spherical}: Spherical geometry preliminaries
          and GV lower bound.
    \item Sec.~\ref{app:gv}: Gilbert--Varshamov lower bound.
    \item Sec.~\ref{app:effective_capacity}: Effective capacity,
          minimum separation threshold (Corollary~\ref{cor:alpha}),
          and sensitivity to $\tau$.
    \item Sec.~\ref{app:repulsion}: Repulsion direction analysis.
    \item Sec.~\ref{app:safety_buffer}: Safety buffer
          (Proposition~\ref{prop:buffer}).
    \item Sec.~\ref{app:empirical}: Empirical capacity estimator.
    \item Sec.~\ref{app:alpha_kappa}: Capacity, $\alpha$, and
          PCA noise.
  \end{itemize}

  \item Sec.~\ref{app:gapgen}: Additional implementation Details of $G$ (GapGen).
  \begin{itemize}
    \item Base pipeline, conditioning, and sampling.
    \item Round-trip re-encoding.
    \item Gap-aware fine-tuning.
  \end{itemize}

  \item Sec.~\ref{app:iapct}: Verification, Visual Examples, and IAPCT.
  \begin{itemize}
    \item Sec.~\ref{app:vlfw}: v-LFW protocol and visual examples.
    \item Sec.~\ref{app:iapct-arch}: IAPCT architecture,
          tokenisation, and training details.
  \end{itemize}
  \item Sec.~\ref{app:add}: Additional results.
  \begin{itemize}
    \item Sec.~\ref{app:add-tsne}: t-SNE of $\mathcal{R}$ vs.\ $\mathcal{V}$.
    \item Sec.~\ref{app:add-grid}: Additional image grids.
  \end{itemize}
\end{itemize}


\section{Broader Impact and Ethics}\label{app:broader}
BIP provisions biometric identities for digital entities, not
synthetic faces of real people.
The non-collision constraint ensures provisioned identities lie
outside all enrolled real identity territories, and
Sec.~\ref{sec:safety} confirms they are detectable by existing
deepfake detectors without modification.
Potential misuse to evade biometric access control is mitigated
by the gallery-relative design: an adversary requires access to
the full enrollment gallery.
All training data are used under their respective licenses, and
v-LFW contains no real person's likeness.

\section{Verification Threshold Selection}
\label{app:threshold}

The BIP non-collision constraint $\cos(v_j, c_i) < \tau$ and all
capacity bounds in Appendix~\ref{app:capacity} are stated in terms
of a verification threshold $\tau$ that must be grounded in the
operating point of the deployed recognition system rather than set
arbitrarily.

\Paragraph{Protocol.}
We evaluate ArcFace ResNet-100~\cite{deng2019arcface} (Glint360K, \texttt{antelopev2/glintr100},
ONNX)
on IJB-B~\cite{whitelam2017iarpa} under the 1:1 verification protocol
across a sweep of FAR operating points from $10^{-5}$ to $10^{-4}$.
The ROC curve and operating points are shown in Fig.~\ref{fig:roc}.

\begin{figure}[t]
  \centering
  \includegraphics[width=0.85\linewidth]{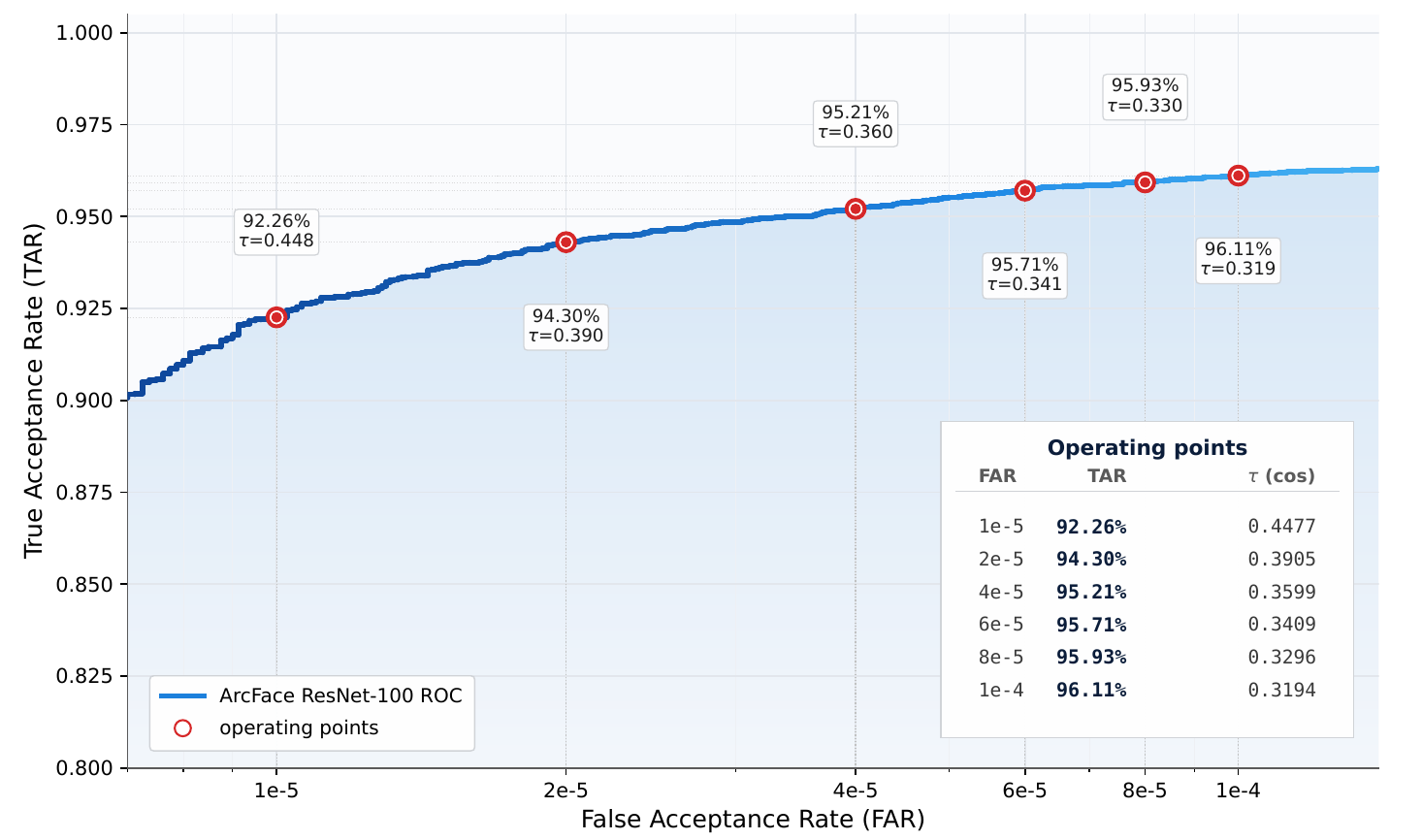}
  \caption{ROC curve for ArcFace ResNet-100 on IJB-B 1:1
  verification (8.01M pairs; 10,270 genuine / 8.00M impostor;
  AUC $= 99.49\%$).
  Red circles mark the six operating points used as BIP threshold
  candidates; $\tau$ denotes the cosine similarity threshold
  achieving the stated FAR.}
  \label{fig:roc}
\end{figure}

\Paragraph{Threshold Choice and BIP Implications.}
The choice of $\tau$ directly governs the stringency of the BIP
non-collision constraint and the available face manifold capacity:

\begin{itemize}[leftmargin=*]
  \item \textbf{Larger $\tau$ (stricter recognition, \emph{e.g.},
        FAR $= 10^{-5}$, $\tau = 0.448$):}
        fewer impostor pairs are falsely accepted.
      The acceptance cap $\{x:\cos(x,c_i)\geq\tau\}$ around
      each real identity is \emph{smaller}, and the BIP
      non-collision constraint $\cos(v_j,c_i)<\tau$ is looser.
  \item \textbf{Smaller $\tau$ (more permissive recognition, \emph{e.g.},
        FAR $= 10^{-4}$, $\tau = 0.319$):}
      more impostor pairs are accepted.
      The acceptance cap around each real identity is
      \emph{larger}, and the BIP non-collision constraint
      is stricter, requiring more perturbation.
\end{itemize}

We use $\tau = 0.391$ as the primary operating point in all main
experiments, corresponding to FAR $\approx 2\times10^{-5}$
and TAR $= 94.30\%$ on IJB-B.
At this operating point, one in $5\times10^4$ impostor pairs is
falsely accepted, a conservative threshold appropriate for
high-security deployments.
We ablate BIP metrics across all six operating points
in Tab.~\ref{tab:alpha_threshold_main} of the main paper,
demonstrating that our method is robust to threshold choice
within FAR $\in [10^{-5}, 10^{-4}]$.

\section{Capacity Bound Derivation}
\label{app:capacity}

This section provides full derivations supporting the BIP
capacity analysis in Sec.~\ref{sec:geometry}.
Throughout, $\mathbb{S}^{d-1}$ denotes the unit hypersphere in
$\mathbb{R}^d$ with $d=512$, $\tau\in(0,1)$ is the verification
threshold of Assumption~\ref{ass:encoder}, and $\tilde{d}=269$ is the effective manifold dimensionality
established in Sec.~\ref{app:pca} below.
All reported numerical bounds use the \emph{exact} regularized
incomplete beta function; the Gaussian $Q$-function is used
for intuition only.


\Paragraph{Notation.}
We distinguish four related quantities used throughout, on distinct
mathematical objects:
\begin{itemize}[leftmargin=*]
  \item $A_{\mathrm{pack}}(\mathcal{M}_\text{face},\tau)$: the true
        (unknown) maximum packing number of a $\tau$-code on the
        face manifold.
  \item $A_{\mathrm{pack}}(\mathbb{S}^{\tilde{d}-1},\tau)$: the
        packing number on the ambient sphere of the submanifold
        approximation, satisfying
        $A_{\mathrm{pack}}(\mathcal{M}_\text{face},\tau) \leq
        A_{\mathrm{pack}}(\mathbb{S}^{\tilde{d}-1},\tau)$ since
        $\mathcal{M}_\text{face}\subset\mathbb{S}^{\tilde{d}-1}$.
  \item $A_{\mathrm{GV}}(\tilde{d},\tau) := 1/\mu(\tau,\tilde{d})$:
        the computable Gilbert--Varshamov lower bound on the
        ambient-sphere packing number,
        $A_{\mathrm{pack}}(\mathbb{S}^{\tilde{d}-1},\tau)\geq
        A_{\mathrm{GV}}$.
  \item $A_{\mathrm{eff}}(\tau) := 1/p_{\mathrm{coll}}(\tau)$:
        the empirical effective capacity, estimated from observed
        collision counts between $\mathcal{V}$ and unseen real
        identities under the empirical real-face distribution.
\end{itemize}
$A_{\mathrm{GV}}$ and $A_{\mathrm{eff}}$ measure different objects
and should not be read as competing bounds on the same quantity;
see Sec.~\ref{app:empirical} for their distinct roles.

\subsection{Real Face Manifold: PCA and Submanifold Model}\label{app:pca}

Fig.~\ref{fig:pca_appendix} visualizes the PCA structure of
Glint360K identity centroids, confirming Observation~\ref{prop:manifold}:
95\% of variance concentrates within $\tilde{d}{=}269$ components
out of $d{=}512$.
We therefore model $\mathcal{M}_\text{face}$ as locally approximated
by $\mathbb{S}^{\tilde{d}-1}$ and apply all packing bounds within
this reduced sphere.
This is a \emph{capacity-scale approximation}: the cap angle
$\arccos(0.391){\approx}67^\circ$ is not a small local neighborhood,
so results should be interpreted as headroom estimates rather
than precise geometric statements.

\begin{figure}[t]
  \centering
  \includegraphics[width=\linewidth]{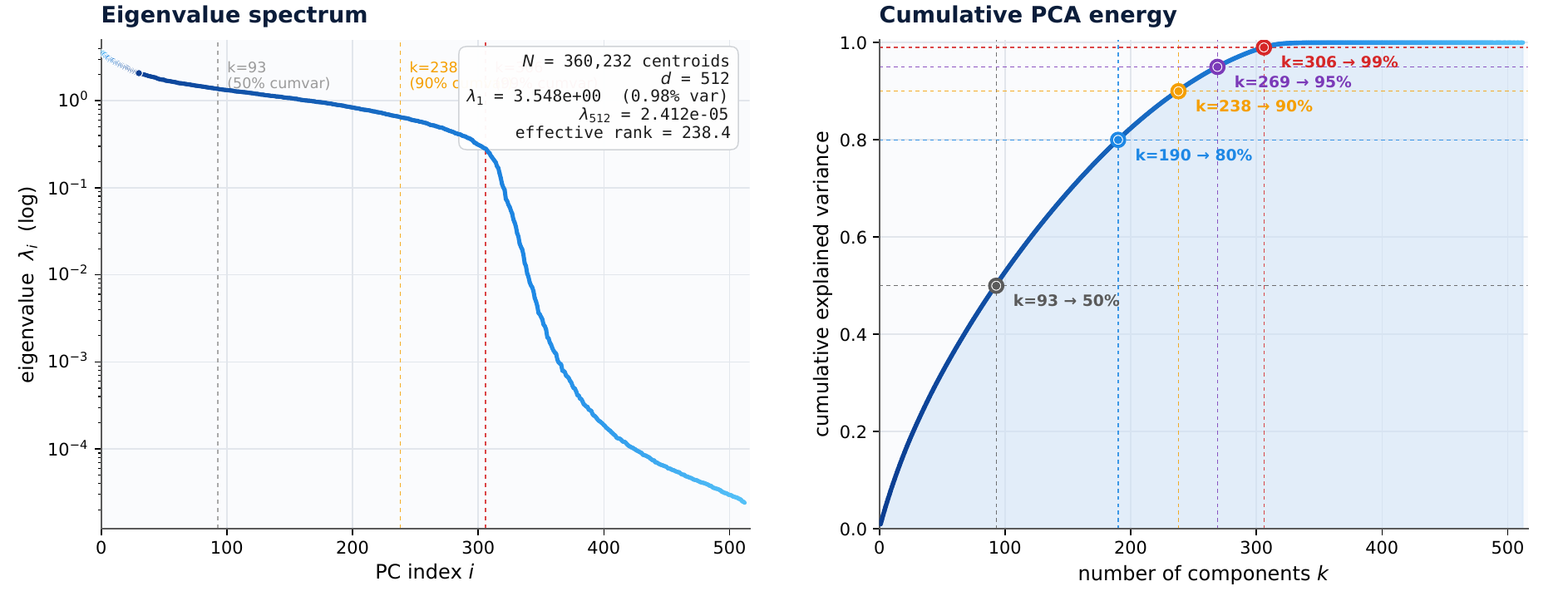}
  \caption{\textbf{PCA of Glint360K identity centroids}
  ($N{=}360{,}232$, ArcFace ResNet-100, $d{=}512$).
  \textbf{Left (eigenvalue spectrum):}
  $\lambda_k$ decays sharply after $k{\approx}300$, with
  $\lambda_{512} = 2.4{\times}10^{-5}$ five orders of magnitude
  below $\lambda_1 = 3.55$; effective rank $= 238.4$.
  \textbf{Right (cumulative explained variance):}
  50\% of variance is captured within $k{=}93$ components,
  90\% within $k{=}238$, 95\% within $k{=}269$, and 99\%
  within $k{=}306$.}
  \label{fig:pca_appendix}
\end{figure}

\subsection{Spherical Geometry Preliminaries}
\label{app:spherical}

\Paragraph{Spherical Cap Volume.}
The \emph{spherical cap} of cosine radius $\tau$ centered at
$p \in \mathbb{S}^{d-1}$ is:
\begin{equation}
  \mathrm{Cap}(p,\tau)
  = \bigl\{x \in \mathbb{S}^{d-1} : \cos(x, p) \geq \tau\bigr\}.
\end{equation}
Its normalized volume is independent of $p$ by rotational symmetry:
\begin{equation}
  \mu(\tau, d)
  = \frac{\mathrm{Vol}(\mathrm{Cap}(p,\tau))}
         {\mathrm{Vol}(\mathbb{S}^{d-1})}
  = \frac{1}{2}\,
    I_{1-\tau^2}\!\!\left(\frac{d-1}{2},\;\frac{1}{2}\right),
  \label{eq:cap_vol}
\end{equation}
where $I_x(a,b)$ is the regularized incomplete beta function.

\Paragraph{Verification of Eq.~\eqref{eq:cap_vol}.}
For $\mathbb{S}^1$ (circle in $\mathbb{R}^2$) with $\tau = 0.5$:
$\mu = \frac{1}{2}I_{0.75}(\frac{1}{2}, \frac{1}{2})
= \frac{1}{\pi}\arcsin(\sqrt{0.75}) = \frac{1}{3}$,
matching the $120^\circ/360^\circ$ arc fraction for a $60^\circ$ half-angle
cap. $\checkmark$

\Paragraph{Monotonicity.}
$\mu(\tau, d)$ is \emph{strictly decreasing} in $\tau$: larger
$\tau$ defines a smaller cap.
$\mu(\tau, d)$ is also strictly decreasing in $d$ by
concentration of measure: for $\tilde{d} < d$,
$\mu(\tau, \tilde{d}) \geq \mu(\tau, d)$.

\Paragraph{Gaussian Approximation (Intuition Only).}
$\mu(\tau, d) \approx Q(\tau\sqrt{d})$ by the CLT, but in the
far-tail regime this substantially overestimates the exact value.
At $\tau = 0.391$, $d = 269$: the Gaussian gives
$Q(0.391\sqrt{269}) \approx Q(6.41) \approx 7.2\times10^{-11}$
versus the exact $\mu = 1.35\times10^{-11}$ (a $5.3\times$
overestimate).
All reported bounds use Eq.~\eqref{eq:cap_vol} directly.

\Paragraph{Packing Number.}
A \emph{spherical $\tau$-code} is a set
$\mathcal{S} \subset \mathbb{S}^{d-1}$ with
$\cos(s_i, s_j) < \tau$ for all $i \neq j$.
Its maximum cardinality is the packing number:
\begin{equation}
  A(d, \tau)
  = \max\bigl\{|\mathcal{S}| :
    \mathcal{S} \subset \mathbb{S}^{d-1},\;
    \cos(s_i,s_j) < \tau\;\forall i\neq j\bigr\},
  \label{eq:packing_number}
\end{equation}
which equals the BIP capacity of Definition~\ref{def:bip}
when $\mathcal{R} = \emptyset$.

\subsection{Gilbert--Varshamov Lower Bound}
\label{app:gv}

\begin{theorem}[Gilbert--Varshamov Bound]
\label{thm:gv}
\begin{equation}
  A(d,\tau) \;\geq\; \frac{1}{\mu(\tau,d)} = A_{\mathrm{GV}}(d,\tau).
  \label{eq:gv}
\end{equation}
\end{theorem}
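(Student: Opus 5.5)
The plan is the standard greedy (maximality) argument. Take a \emph{maximal} spherical $\tau$-code $\mathcal{S}=\{s_1,\ldots,s_n\}\subset\mathbb{S}^{d-1}$, meaning no point of $\mathbb{S}^{d-1}$ can be added while keeping $\cos(s_i,s_j)<\tau$ for all $i\neq j$. Such a code exists and is finite. Because $\tau<1$, any two code points are separated by an angle of at least $\arccos\tau>0$. By compactness of $\mathbb{S}^{d-1}$ (a disjoint-small-caps volume argument), every $\tau$-code is therefore finite with uniformly bounded size. Consequently, a code of maximum cardinality $A(d,\tau)$ exists, and it is automatically maximal under inclusion.

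The key step is a covering claim. Maximality means that for every $x\in\mathbb{S}^{d-1}$ there is some $i$ with $\cos(x,s_i)\geq\tau$; otherwise $\mathcal{S}\cup\{x\}$ would still be a $\tau$-code. Hence
\[
\mathbb{S}^{d-1}=\bigcup_{i=1}^{n}\mathrm{Cap}(s_i,\tau),
\]
using exactly the closed caps defined before Eq.~\eqref{eq:cap_vol}. Normalized surface measure is subadditive and rotation invariant, so each cap has measure $\mu(\tau,d)$ independently of its center. This gives $1\leq n\,\mu(\tau,d)$, i.e.\ $A(d,\tau)\geq n\geq 1/\mu(\tau,d)$. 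Since $\tau>0$ we have $\mu(\tau,d)\in(0,1/2]$, so the ratio is well defined.

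I do not expect a genuine obstacle. The only points needing care are the following.
\begin{itemize}
\item The existence of a maximal finite code, handled by the packing/compactness remark above.
\item The strict versus non-strict inequality convention. The code condition uses strict $<\tau$ while the cap is closed ($\geq\tau$). These are complementary, so the covering step is exact and no boundary-measure issue arises.
\end{itemize}
If one prefers a constructive phrasing, the same argument reads as a greedy algorithm: keep adding any uncovered point. Each added point removes at most $\mu(\tau,d)$ of uncovered mass, so the process cannot halt before $1/\mu(\tau,d)$ points have been placed.
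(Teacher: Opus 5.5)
Your proposal is correct and follows the same route as the paper's proof. You take a maximal $\tau$-code, use maximality to show that the closed caps $\mathrm{Cap}(s,\tau)$ cover $\mathbb{S}^{d-1}$, and compare total cap measure with the sphere to get $|\mathcal{S}^*|\,\mu(\tau,d)\geq 1$; your compactness argument for the existence of a finite maximal (indeed maximum) code, and your check that the strict code condition and the closed caps are exact complements, fill in details the paper leaves implicit.
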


\begin{proof}
Let $\mathcal{S}^* \subset \mathbb{S}^{d-1}$ be a maximal
$\tau$-code, \emph{i.e.}, no additional point can be appended without
violating the separation constraint.
Maximality implies the caps
$\{\mathrm{Cap}(s,\tau)\}_{s \in \mathcal{S}^*}$
cover all of $\mathbb{S}^{d-1}$; otherwise an uncovered point
could be appended, contradicting maximality.
Comparing total cap volume to sphere volume:
\begin{equation}
  |\mathcal{S}^*| \cdot \mu(\tau, d) \geq 1
  \;\implies\;
  |\mathcal{S}^*| \geq \frac{1}{\mu(\tau,d)}.
\end{equation}
Since $\mathcal{S}^*$ is itself a valid $\tau$-code and
$A(d,\tau)$ is the maximum cardinality over all such codes:
  $A(d,\tau) \geq |\mathcal{S}^*| \geq \frac{1}{\mu(\tau,d)}$.
\end{proof}

\Paragraph{Numerical Evaluation at $\tau=0.391$, $d=512$.}
\begin{equation}
  \mu(0.391,512) \approx 1.74\times10^{-20},
  \quad
  A_{\mathrm{GV}}(512,0.391) \approx 5.75\times10^{19}
  \approx 2^{65.6}.
\end{equation}
This full-hypersphere bound is not directly useful for BIP since
real face identities occupy only a $\tilde{d}$-dimensional
submanifold of $\mathbb{S}^{511}$.

\subsection{Effective Capacity and GV Lower Bound}
\label{app:effective_capacity}
We use Theorem~\ref{thm:gv} only to obtain an ambient-sphere
capacity reference for the $\mathbb{S}^{\tilde{d}-1}$ approximation.
This reference should not be interpreted as a rigorous lower bound
on the packing capacity of $\mathcal{M}_\text{face}$.



\begin{proposition}[Ambient-sphere packing under the submanifold model]
\label{prop:capacity}
Under the submanifold approximation
$\mathcal{M}_\text{face}\subset\mathbb{S}^{\tilde{d}-1}$ of
Sec.~\ref{app:pca}, the packing number of the ambient sphere
satisfies:
\begin{equation}
  A_{\mathrm{pack}}(\mathbb{S}^{\tilde{d}-1},\tau)
  \;\geq\;
  A_{\mathrm{GV}}(\tilde{d},\tau)
  = \frac{1}{\mu(\tau,\tilde{d})}
  = \frac{2}{I_{1-\tau^2}\!\left(\dfrac{\tilde{d}-1}{2},
    \dfrac{1}{2}\right)}.
  \label{eq:capacity}
\end{equation}
For $\tilde{d}=269$ and $\tau=0.391$,
$\mu(0.391,269)=1.35\times10^{-11}$, giving:
\begin{equation}
  A_{\mathrm{GV}}(269,0.391)
  = 7.41\times10^{10}
  \approx 2^{36.1}.
  \label{eq:capacity_numeric}
\end{equation}
Since $\mathcal{M}_\text{face}\subset\mathbb{S}^{\tilde{d}-1}$, we
have $A_{\mathrm{pack}}(\mathcal{M}_\text{face},\tau)\leq
A_{\mathrm{pack}}(\mathbb{S}^{\tilde{d}-1},\tau)$.
We therefore use $A_{\mathrm{GV}}$ only as an ambient-sphere
capacity-scale reference, not as a rigorous lower bound on
$A_{\mathrm{pack}}(\mathcal{M}_\text{face},\tau)$.
\end{proposition}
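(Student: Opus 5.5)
The proof is essentially a direct specialization of Theorem~\ref{thm:gv} followed by a numerical evaluation and a set-inclusion remark. I would proceed in three steps.

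\emph{Step 1 (inequality).} I apply Theorem~\ref{thm:gv} with the ambient dimension $d$ replaced by $\tilde{d}$, i.e.\ on $\mathbb{S}^{\tilde{d}-1}\subset\mathbb{R}^{\tilde{d}}$. The argument there uses only two ingredients: that a maximal $\tau$-code has caps covering the sphere, and that every cap has the same normalized volume by rotational symmetry. Both hold verbatim in any dimension. This gives $A_{\mathrm{pack}}(\mathbb{S}^{\tilde{d}-1},\tau)=A(\tilde{d},\tau)\geq 1/\mu(\tau,\tilde{d})$. I then substitute the closed form of Eq.~\eqref{eq:cap_vol}, $\mu(\tau,\tilde{d})=\tfrac12 I_{1-\tau^2}\bigl(\tfrac{\tilde{d}-1}{2},\tfrac12\bigr)$, to obtain the stated expression $2/I_{1-\tau^2}(\cdot,\cdot)$.

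For completeness, I would briefly justify Eq.~\eqref{eq:cap_vol}. The key fact is that the marginal density of $t=x_1$ for $x$ uniform on $\mathbb{S}^{\tilde{d}-1}$ is proportional to $(1-t^2)^{(\tilde{d}-3)/2}$. The substitution $u=1-t^2$ then converts $\int_\tau^1$ of this density into half of a regularized incomplete beta integral with parameters $(\tfrac{\tilde{d}-1}{2},\tfrac12)$.

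\emph{Step 2 (numerics).} I evaluate $I_{0.847119}(134,\tfrac12)$, since $1-0.391^2=0.847119$ and $(269-1)/2=134$, using a standard continued-fraction or library implementation of the regularized incomplete beta function. This should give $\mu\approx1.35\times10^{-11}$ and hence $A_{\mathrm{GV}}\approx7.41\times10^{10}\approx2^{36.1}$.

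As a sanity check, I would use the leading-order tail asymptotic
\[
\mu(\tau,\tilde d)\approx \frac{(1-\tau^2)^{(\tilde d-1)/2}}{\tau\sqrt{2\pi \tilde d}}.
\]
I would also confirm that the Gaussian value $Q(6.41)$ lands about $5\times$ higher, consistent with the remark in Sec.~\ref{app:spherical}.

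\emph{Step 3 (comparison).} Any $\tau$-code contained in $\mathcal{M}_\text{face}$ is also a $\tau$-code in $\mathbb{S}^{\tilde{d}-1}$. Taking maxima gives $A_{\mathrm{pack}}(\mathcal{M}_\text{face},\tau)\le A_{\mathrm{pack}}(\mathbb{S}^{\tilde{d}-1},\tau)$. This is exactly why the GV quantity cannot be transferred downward to the manifold, and the proposition claims no such transfer.

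\emph{Main obstacle.} The only step needing care is the numerical evaluation in Step 2. Naive series for the incomplete beta function in this far-tail regime either underflow or lose precision. I would therefore use log-space evaluation of the prefactor $x^a(1-x)^b/(aB(a,b))$ together with Lentz's continued fraction, or cross-check the result against a high-precision library.
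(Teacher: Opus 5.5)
Your proposal is correct and follows the paper's own proof. The paper applies Theorem~\ref{thm:gv} with $\tilde d$ in place of $d$, and gets $A_{\mathrm{pack}}(\mathcal{M}_\text{face},\tau)\le A_{\mathrm{pack}}(\mathbb{S}^{\tilde d-1},\tau)$ from the inclusion $\mathcal{M}_\text{face}\subset\mathbb{S}^{\tilde d-1}$. Your extras are correct supplements to details the paper simply asserts: the derivation of Eq.~\eqref{eq:cap_vol} from the $(1-t^2)^{(\tilde d-3)/2}$ marginal, and the evaluation $\tfrac12 I_{0.847119}(134,\tfrac12)\approx1.35\times10^{-11}$, which your leading-order tail asymptotic ($\approx1.38\times10^{-11}$) and the $Q(6.41)$ comparison corroborate.
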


\begin{proof}
Apply Theorem~\ref{thm:gv} to dimension $\tilde{d}$ in place of $d$;
the inequality on $A_{\mathrm{pack}}(\mathcal{M}_\text{face},\tau)$
follows from the subset relation
$\mathcal{M}_\text{face}\subset\mathbb{S}^{\tilde{d}-1}$.
\end{proof}


\begin{corollary}[Minimum Separation Threshold]
\label{cor:alpha}
For $r, z \in \mathbb{S}^{d-1}$ with $p = r \cdot z$ and
$|p| < \tau$, let
$s = \mathrm{normalize}(r + \alpha z)$.
The unique $\alpha^* > 0$ at which $\cos(s, r) = \tau$
exactly is:
\begin{equation}
  \alpha^*(p, \tau)
  = \frac{\sqrt{1-\tau^2}\!\left[
      p\sqrt{1-\tau^2} + \tau\sqrt{1-p^2}
    \right]}{\tau^2 - p^2}.
  \label{eq:alphastar_general}
\end{equation}
For $\alpha > 0$ and $|p| < 1$, $\cos(s, r)$ is strictly
decreasing in $\alpha$:
\begin{equation}
  \frac{d}{d\alpha}
  \frac{1 + \alpha p}{\sqrt{1 + 2\alpha p + \alpha^2}}
  = \frac{\alpha(p^2 - 1)}{(1 + 2\alpha p + \alpha^2)^{3/2}}
  < 0,
  \label{eq:cosdecrease}
\end{equation}
so $\cos(s, r) < \tau$ requires $\alpha > \alpha^*(p, \tau)$
strictly.

In the orthogonal case $p = 0$, Eq.~\eqref{eq:alphastar_general}
simplifies to:
\begin{equation}
  \alpha^*(0, \tau)
  = \frac{\sqrt{1-\tau^2}}{\tau}.
  \label{eq:alphastar_orth}
\end{equation}
For the operating thresholds used in this work
($\tau \leq 0.5 < 1/\sqrt{2}$), $\alpha^*(0,\tau)$ is the
maximum of $\alpha^*(p,\tau)$ over the repulsive range
$p \in [-\tau, 0]$.
Therefore, $\alpha > \sqrt{1-\tau^2}/\tau$ is sufficient for
$\cos(s,r) < \tau$ over the repulsive range $p\in[-\tau,0]$.

\Paragraph{Scope.}
Corollary~\ref{cor:alpha} bounds $\cos(s,r)<\tau$ only against the
construction reference $r$. The orthogonal worst-case
$\alpha^*(0,\tau)$ should therefore be read as a principled lower
bound on $\alpha$ in the proposal distribution. Gallery-wide
non-collision in embedding space is not guaranteed by $\alpha$ alone;
it is enforced exactly by the hard check of
Eq.~\eqref{eq:hardcheck1} at acceptance time. Image-level
non-collision after realization, and non-collision against unseen
future identities, are empirical evaluations reported in
Sec.~\ref{sec:exp} and Sec.~\ref{app:empirical}.

\textbf{Note on positive $p$:}
With PCA-aware noise (Eq.~\eqref{eq:perturbation}), $p=r\cdot z$
can be positive. For $p>0$, $\alpha^*$ grows rapidly:
$\alpha^*(0.3,0.391)\approx9.5$, far exceeding the orthogonal
bound $2.35$.
The hard checks of Eqs.~\eqref{eq:hardcheck1}--\eqref{eq:hardcheck2}
enforce BIP constraints irrespective of the sign of $r\cdot z$.
\end{corollary}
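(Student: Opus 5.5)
The plan is to reduce everything to two-dimensional geometry in the plane spanned by $r$ and $z$, and to use the closed form of Lemma~\ref{lem:displacement},
\[
f(\alpha) := \cos(s,r) = \frac{1+\alpha p}{\sqrt{1+2\alpha p+\alpha^2}} .
\]

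\emph{Monotonicity.} Differentiate $f$ by the quotient rule. The numerator simplifies to $p(1+2\alpha p+\alpha^2) - (1+\alpha p)(p+\alpha) = \alpha(p^2-1)$, which gives Eq.~\eqref{eq:cosdecrease}. This is strictly negative for $\alpha>0$ and $|p|<1$. Note also that $1+2\alpha p+\alpha^2 = \|r+\alpha z\|^2 > 0$ whenever $|p|<1$, so $f$ is well defined and continuous on $[0,\infty)$.

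\emph{Existence and uniqueness of $\alpha^*$.} We have $f(0)=1>\tau$ and $\lim_{\alpha\to\infty} f(\alpha) = p < \tau$ (using $|p|<\tau$). Strict monotonicity together with the intermediate value theorem then gives a unique $\alpha^*>0$ with $f(\alpha^*)=\tau$. The same monotonicity shows that $f(\alpha)<\tau$ holds exactly when $\alpha>\alpha^*$.

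\emph{Closed form for $\alpha^*$.} Rather than squaring $f=\tau$, which introduces spurious roots, I would decompose
\[
z = p\,r + \sqrt{1-p^2}\,w, \qquad w \perp r,\ \|w\|=1 .
\]
Then $r+\alpha z = (1+\alpha p)\,r + \alpha\sqrt{1-p^2}\,w$.
\begin{itemize}
\item Since $\tau>0$, the condition $\cos(s,r)=\tau$ forces $1+\alpha p>0$.
\item It is then equivalent to the tangent condition $\alpha\sqrt{1-p^2}/(1+\alpha p) = \sqrt{1-\tau^2}/\tau$.
\end{itemize}
This equation is linear in $\alpha$ and yields
\[
\alpha^* = \frac{\sqrt{1-\tau^2}}{\tau\sqrt{1-p^2} - p\sqrt{1-\tau^2}} .
\]
Multiplying numerator and denominator by the conjugate $\tau\sqrt{1-p^2}+p\sqrt{1-\tau^2}$ turns the denominator into $\tau^2(1-p^2) - p^2(1-\tau^2) = \tau^2-p^2$, which gives Eq.~\eqref{eq:alphastar_general}. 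Setting $p=0$ gives Eq.~\eqref{eq:alphastar_orth}. The denominator $\tau\sqrt{1-p^2}-p\sqrt{1-\tau^2}$ is positive for $|p|<\tau$; this follows from the angle form below.

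\emph{Maximality over the repulsive range.} This is the step needing care, since the rationalized form is $0/0$ at $p=-\tau$. I would work with the unrationalized form and write $\tau=\cos\theta$ and $p=\cos\psi$, with $\theta\in(0,\pi/2)$. The denominator then becomes $\sin\theta_{\!}\cdot$-free expression
\[
\cos\theta\sin\psi - \cos\psi\sin\theta = \sin(\psi-\theta),
\]
so that $\alpha^* = \sin\theta/\sin(\psi-\theta)$. The range $p\in[-\tau,0]$ corresponds to $\psi\in[\pi/2,\ \pi-\theta]$, so $\psi-\theta\in[\pi/2-\theta,\ \pi-2\theta]\subset(0,\pi)$.

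On this interval $\sin$ is positive and concave, so its minimum, which is where $\alpha^*$ is maximal, is attained at an endpoint. The two endpoint values are:
\begin{itemize}
\item $\sin(\pi/2-\theta)=\tau$, at $p=0$;
\item $\sin(\pi-2\theta) = 2\tau\sqrt{1-\tau^2}$, at $p=-\tau$.
\end{itemize}
The $p=0$ endpoint is the minimizer precisely when $2\sqrt{1-\tau^2}\ge 1$, i.e.\ $\tau\le\sqrt3/2$. This certainly holds under the stated hypothesis $\tau\le 0.5<1/\sqrt2$.

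\emph{Conclusion.} Hence $\alpha^*(p,\tau)\le\alpha^*(0,\tau)$ for all $p\in[-\tau,0]$. Combined with monotonicity, any $\alpha>\sqrt{1-\tau^2}/\tau$ gives $\cos(s,r)<\tau$ throughout the repulsive range. This also covers $p=-\tau$, where $\lim_{\alpha\to\infty} f = -\tau$ and the same argument applies via the angle form.

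The main obstacle I expect is exactly this endpoint or degeneracy issue. The closed form in Eq.~\eqref{eq:alphastar_general} degenerates at $p=-\tau$, so the argument must use the sine parametrization rather than the rationalized expression.
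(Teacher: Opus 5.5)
Your argument is correct. Your monotonicity and existence/uniqueness steps are the same as the paper's, but the two substantive steps take a different route.

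\textbf{Closed form.} The paper squares $f(\alpha)=\tau$, after noting that $1+\alpha p>0$ at any solution. It then solves the quadratic $\alpha^2(\tau^2-p^2)+2\alpha p(\tau^2-1)+(\tau^2-1)=0$ via its discriminant $4\tau^2(1-\tau^2)(1-p^2)$ and takes the positive root. Your decomposition $z=p\,r+\sqrt{1-p^2}\,w$ instead turns the condition into a linear tangent equation. This removes any need for root selection. The unrationalized form $\sqrt{1-\tau^2}/(\tau\sqrt{1-p^2}-p\sqrt{1-\tau^2})$, equivalently $\sin\theta/\sin(\psi-\theta)$ with $\tau=\cos\theta$ and $p=\cos\psi$, also stays valid at $p=-\tau$.

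\textbf{Maximality.} The paper uses the implicit function theorem to get $d\alpha^*/dp=\alpha^*(\alpha^*+p)/(1-p^2)$. It reduces the sign to that of $p\sqrt{1-p^2}+\tau\sqrt{1-\tau^2}$ and uses that $p\sqrt{1-p^2}$ is nondecreasing for $p^2\le 1/2$; the endpoint $p=-\tau$ is examined through a limit. This yields a stronger structural fact: $\alpha^*$ is nondecreasing in $p$ on all of $[-\tau,0]$. That fact genuinely requires $\tau\le 1/\sqrt2$.

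Your concavity-plus-endpoints argument proves only what the corollary actually asserts, namely that the maximum is at $p=0$. In exchange, it does so under the weaker hypothesis $\tau\le\sqrt3/2$ and needs no limiting argument at the degenerate endpoint. For $1/\sqrt2<\tau\le\sqrt3/2$ the paper's monotonicity really fails, since $\alpha^*$ has an interior minimum at $p=-\sqrt{1-\tau^2}$, yet your endpoint comparison still holds.

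One cosmetic fix: the phrase just before your identity $\cos\theta\sin\psi-\cos\psi\sin\theta=\sin(\psi-\theta)$ is garbled and should simply read ``the denominator becomes''.
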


\begin{proof}
\textbf{Deriving $\alpha^*(p,\tau)$.}
From Lemma~\ref{lem:displacement}:
\begin{equation}
  \cos(s, r)
  = \frac{1 + \alpha p}{\sqrt{1 + 2\alpha p + \alpha^2}}.
  \label{eq:cos_sr_proof}
\end{equation}
Setting $\cos(s,r) = \tau > 0$: the right-hand side equals
$\tau > 0$, so the numerator $1 + \alpha p$ must be positive
at any solution (since the denominator is always positive).
Squaring the equality at the solution is therefore valid:
\begin{equation}
  \tau^2(1 + 2\alpha p + \alpha^2) = (1 + \alpha p)^2,
\end{equation}
which rearranges to the quadratic:
\begin{equation}
  \alpha^2(\tau^2 - p^2)
  + 2\alpha p(\tau^2 - 1)
  + (\tau^2 - 1) = 0.
  \label{eq:quadratic_proof}
\end{equation}
Since $|p| < \tau$, we have $\tau^2 - p^2 > 0$, so
Eq.~\eqref{eq:quadratic_proof} is a well-posed quadratic.
Its discriminant is:
\begin{align}
  \Delta
  &= 4p^2(\tau^2-1)^2 - 4(\tau^2-p^2)(\tau^2-1)
   = 4(\tau^2-1)\bigl[p^2(\tau^2-1) - (\tau^2-p^2)\bigr]
   \notag\\
  &= 4(\tau^2-1)\tau^2(p^2-1)
   = 4\tau^2(1-\tau^2)(1-p^2) > 0,
  \label{eq:discriminant_proof}
\end{align}
giving $\sqrt{\Delta} = 2\tau\sqrt{(1-\tau^2)(1-p^2)}$.
The positive root of~\eqref{eq:quadratic_proof} is:
\begin{equation}
  \alpha^*(p,\tau)
  = \frac{p(1-\tau^2) + \tau\sqrt{(1-\tau^2)(1-p^2)}}
         {\tau^2 - p^2}
  = \frac{\sqrt{1-\tau^2}\bigl[p\sqrt{1-\tau^2}
    + \tau\sqrt{1-p^2}\bigr]}{\tau^2 - p^2},
\end{equation}
establishing Eq.~\eqref{eq:alphastar_general}.
At $p = 0$: $\alpha^*(0,\tau) = \tau\sqrt{1-\tau^2}/\tau^2
= \sqrt{1-\tau^2}/\tau$, confirming Eq.~\eqref{eq:alphastar_orth}.

\textbf{Strict inequality Eq.~\eqref{eq:cosdecrease}.}
Direct differentiation of
$f(\alpha) = (1+\alpha p)/\sqrt{1 + 2\alpha p + \alpha^2}$:
\begin{equation}
  f'(\alpha)
  = \frac{p\sqrt{1+2\alpha p+\alpha^2}
         - (1+\alpha p)\cdot\dfrac{p+\alpha}{\sqrt{1+2\alpha p+\alpha^2}}}
         {1+2\alpha p+\alpha^2}
  = \frac{\alpha(p^2-1)}{(1+2\alpha p+\alpha^2)^{3/2}}.
\end{equation}
Since $\alpha > 0$ and $p^2 < 1$ (as $|p| < \tau < 1$),
$f'(\alpha) < 0$, establishing Eq.~\eqref{eq:cosdecrease}.
Since $f(0) = 1 > \tau$ and
$\lim_{\alpha\to\infty} f(\alpha) = p < \tau$
(as $|p| < \tau$), strict monotonicity of $f$ implies
the solution $\alpha^*(p,\tau)$ is unique.

\textbf{Upper bound at $\tau \leq 1/\sqrt{2}$.}
By the implicit function theorem applied to
$f(\alpha^*(p), p) = \tau$, with $f_\alpha < 0$
(from Eq.~\eqref{eq:cosdecrease}) and
$f_p = \alpha^2(\alpha+p)/(1+2\alpha p+\alpha^2)^{3/2}$:
\begin{equation}
  \frac{d\alpha^*}{dp}
  = -\frac{f_p}{f_\alpha}
  = \frac{\alpha^*(\alpha^* + p)}{1 - p^2}.
  \label{eq:dalpha_dp}
\end{equation}
The sign of~\eqref{eq:dalpha_dp} depends on $\alpha^*(p,\tau)+p$.
Taking the limit $p \to -\tau^+$ using
Eq.~\eqref{eq:alphastar_general}:
\begin{equation}
  \lim_{p \to -\tau^+}(\alpha^*(p,\tau) + p)
  = \frac{1}{2\tau} - \tau
  = \frac{1 - 2\tau^2}{2\tau}
  \;\geq\; 0
  \quad\iff\quad \tau \leq \frac{1}{\sqrt{2}}.
  \label{eq:boundary_limit}
\end{equation}
At $p = 0$: $\alpha^*(0,\tau) + 0 = \sqrt{1-\tau^2}/\tau > 0$.
Since $\alpha^*>0$ and $1-p^2>0$, the sign of
$d\alpha^*/dp$ is determined by $\alpha^*+p$.
Using Eq.~\eqref{eq:alphastar_general}:
\begin{equation}
  \alpha^*(p,\tau)+p
  = \frac{\sqrt{1-p^2}\bigl[p\sqrt{1-p^2}+\tau\sqrt{1-\tau^2}\bigr]}
         {\tau^2-p^2}.
\end{equation}
The denominator $\tau^2-p^2>0$ for $|p|<\tau$, and
$\sqrt{1-p^2}>0$, so the sign reduces to
$p\sqrt{1-p^2}+\tau\sqrt{1-\tau^2}$.
Define $h(p)=p\sqrt{1-p^2}$; then
$h'(p)=(1-2p^2)/\sqrt{1-p^2}\geq0$
for $p^2\leq\tau^2\leq1/2$ (since $\tau\leq1/\sqrt{2}$),
so $h$ is non-decreasing on $[-\tau,0]$ and
$h(p)\geq h(-\tau)=-\tau\sqrt{1-\tau^2}$.
Therefore $p\sqrt{1-p^2}+\tau\sqrt{1-\tau^2}\geq0$,
giving $\alpha^*(p,\tau)+p\geq0$ and $d\alpha^*/dp\geq0$
throughout $p\in[-\tau,0]$.
Hence $\alpha^*(0,\tau)=\sqrt{1-\tau^2}/\tau$ is the
maximum over the repulsive range.
\end{proof}

\Paragraph{Sensitivity to $\tau$.}
Table~\ref{tab:capacity} reports exact GV bounds and the
sufficient perturbation threshold for all thresholds used in
this work.
Since $\mu(\tau,\tilde{d})$ is strictly decreasing in $\tau$,
larger $\tau$ corresponds to a less restrictive non-collision
constraint (smaller excluded cap per identity), yielding a
larger $A_\text{GV}$.
The safety buffer requires provisioning at
$\tau_\text{safe} < \tau$ (stricter constraint), which
\emph{reduces} $A_\text{GV}$ and \emph{increases} the required
perturbation; see Proposition~\ref{prop:buffer}.

\begin{table}[h]
\centering
\small
\caption{GV lower bound $A_\text{GV}=1/\mu(\tau,269)$ on the
ambient sphere $\mathbb{S}^{268}$ at the six IJB-B operating
points (Fig.~\ref{fig:roc}), computed via exact regularized
incomplete beta. $\alpha^*(0,\tau)=\sqrt{1-\tau^2}/\tau$ is the
orthogonal worst-case sufficient perturbation for
$\cos(s,r)<\tau$ against the construction reference $r$, over
repulsive directions $p\in[-\tau,0]$, valid for $\tau\leq1/\sqrt{2}$;
gallery-wide non-collision is enforced separately by the hard
check of Eq.~\eqref{eq:hardcheck1}.
\textbf{Bold}: primary operating point ($\tau=0.391$,
FAR$\approx2\times10^{-5}$).}
\label{tab:capacity}
\begin{tabular}{ccccc}
\toprule
$\tau$ & $\mu(\tau,269)$ &
$\log_2 A_\text{GV}$ &
$A_\text{GV}=1/\mu$ &
$\alpha^*(0,\tau)$ \\
\midrule
0.319 & $4.21\times10^{-8\phantom{0}}$ & 24.50 & $2.38\times10^{7\phantom{00}}$ & 2.97 \\
0.330 & $1.40\times10^{-8\phantom{0}}$ & 26.09 & $7.15\times10^{7\phantom{00}}$ & 2.86 \\
0.341 & $4.45\times10^{-9\phantom{0}}$ & 27.74 & $2.25\times10^{8\phantom{00}}$ & 2.76 \\
0.360 & $5.52\times10^{-10}$           & 30.75 & $1.81\times10^{9\phantom{00}}$ & 2.59 \\
\textbf{0.391} & $\mathbf{1.35\times10^{-11}}$ & \textbf{36.11} & $\mathbf{7.41\times10^{10}}$ & \textbf{2.35} \\
0.448 & $4.92\times10^{-15}$           & 47.53 & $2.03\times10^{14}$           & 2.00 \\
\bottomrule
\end{tabular}
\end{table}

\Paragraph{Ambient-sphere capacity reference.}
$A_\text{GV}(\tilde{d},\tau)$ lower-bounds the packing number on
the ambient sphere $\mathbb{S}^{\tilde{d}-1}$. As noted in
Proposition~\ref{prop:capacity}, the face-manifold packing
$A_\text{pack}(\mathcal{M}_\text{face},\tau)$ is upper-bounded by,
but not lower-bounded by, $A_\text{pack}(\mathbb{S}^{\tilde{d}-1},\tau)$,
so $A_\text{GV}$ serves as a capacity-scale reference rather than
a strict lower bound on $\mathcal{M}_\text{face}$ capacity.
At $\tau=0.391$, $A_\text{GV}(269,0.391)\approx7.41\times10^{10}$;
both 1M and 10M provisioned identities fall orders of magnitude
below this ambient reference:
\begin{equation}
  \frac{A_\text{GV}(269,0.391)}{10^6}\approx 2^{16.2},
  \qquad
  \frac{A_\text{GV}(269,0.391)}{10^7}\approx 2^{12.9}.
\end{equation}
The operationally meaningful capacity statement is empirical:
at $\tau=0.391$ and $\alpha=4$, no collisions are observed
against $\mathcal{R}$ for $|\mathcal{V}|$ up to $10^7$
(Tab.~\ref{tab:alpha_threshold_main}), and the held-out collision
rate against unseen real identities remains stable at $\approx 0$
(Fig.~\ref{fig:openworld}, Sec.~\ref{app:empirical}).

\subsection{Repulsion Direction: Heuristic Motivation}
\label{app:repulsion}

Remark~\ref{rem:repulsion} in the main text introduces
$z^*$ as a repulsion heuristic.
We provide the precise mathematical analysis here.

\Paragraph{What is proven.}
With $m = \sum_{k=1}^K w_{n_k} c_{n_k}$ and
$z^* = -m/\|m\|_2$:
\begin{equation}
  z^* \cdot c_{n_k} < 0
  \quad \text{for any } c_{n_k} \text{ satisfying }
  c_{n_k}^\top m > 0.
\end{equation}
Neighbors positively aligned with the weighted centroid $m$
satisfy this condition; whether \emph{all} neighbors do
depends on the local geometry and is not guaranteed solely
by the nearest-neighbour construction.

\Paragraph{What is not proven after normalization.}
The relevant quantity for BIP is the rate of change of cosine
similarity to a neighbor under the spherically-normalized
perturbation $s(\alpha) = \mathrm{normalize}(r + \alpha z^*)$.
By the chain rule:
\begin{equation}
  \frac{d}{d\alpha}\cos(s(\alpha), c_{n_k})\bigg|_{\alpha=0}
  = z^* \cdot c_{n_k} - (r \cdot z^*)(r \cdot c_{n_k}).
  \label{eq:deriv_neighbor}
\end{equation}
Even when $z^* \cdot c_{n_k} < 0$, the second term
$(r \cdot z^*)(r \cdot c_{n_k})$ can dominate:
if $r \cdot z^* < 0$ (typical for a repulsion direction) and
$r \cdot c_{n_k} > 0$ (neighbor close to $r$), their product
is negative and contributes positively to the derivative.
Whether Eq.~\eqref{eq:deriv_neighbor} is negative, \emph{i.e.},
whether moving along $z^*$ genuinely increases cosine
distance to $c_{n_k}$ after renormalization --- depends on
the local geometry and is not guaranteed by the construction
of $z^*$ alone.

\Paragraph{Correct status.}
$z^*$ is a \emph{repulsive heuristic}: it improves the
probability that a candidate passes the BIP hard checks,
but does not formally guarantee it.
The formal non-collision guarantee comes exclusively from
Eqs.~\eqref{eq:hardcheck1}--\eqref{eq:hardcheck2}, which
accept a candidate $s$ only after exact cosine verification
against all of $\mathcal{R}$ and $\mathcal{V}_t$.

    \subsection{Safety Buffer for Open-World Robustness}
    \label{app:safety_buffer}

\begin{proposition}[Safety Buffer]
\label{prop:buffer}
Let $\tau$ be the operating verification threshold
(collision when $\cos \geq \tau$) and let
$\tau_{\mathrm{safe}} = \tau - \Delta$ with $\Delta > 0$.
If $\mathcal{V}$ is provisioned such that
$\cos(v_j, c_i) < \tau_{\mathrm{safe}}$
for all $v_j \in \mathcal{V}$, $c_i \in \mathcal{R}$, then:
\begin{enumerate}[label=(\roman*),leftmargin=*]
  \item Every provisioned identity has a cosine margin of
        at least $\Delta$ below the operating collision
        boundary $\tau$ with respect to all enrolled gallery
        identities at provisioning time.
  \item For a subsequently enrolled real identity $r'$,
        any $v_j$ satisfying
        $\cos(v_j, r') \in [\tau_{\mathrm{safe}}, \tau)$
        does \emph{not} cause an operational collision
        (since $\cos(v_j, r') < \tau$), but lies within the
        monitoring zone and may trigger reassessment under
        the provisioning policy.
        Only $\cos(v_j, r') \geq \tau$ constitutes a
        collision requiring revocation.
  \item The safety buffer requires \emph{more} perturbation
        than provisioning at $\tau$: since
        $\tau_{\mathrm{safe}} < \tau$ and
        $d\alpha^*(0,\tau)/d\tau = -1/(\tau^2\sqrt{1-\tau^2}) < 0$,
        it follows that
        $\alpha^*(0, \tau_{\mathrm{safe}}) > \alpha^*(0, \tau)$.
        Concretely, $\tau_{\mathrm{safe}} = 0.360$,
        $\tau = 0.391$: $\alpha^* \approx 2.59$ versus $2.35$.
\end{enumerate}
\end{proposition}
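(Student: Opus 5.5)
The three parts are nearly definitional, so the plan is to verify each directly from the provisioning hypothesis and, for (iii), from the closed form $\alpha^*(0,\tau)=\sqrt{1-\tau^2}/\tau$ of Corollary~\ref{cor:alpha}.

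For (i), I fix any $v_j\in\mathcal{V}$ and $c_i\in\mathcal{R}$. The hypothesis gives $\cos(v_j,c_i)<\tau_{\mathrm{safe}}=\tau-\Delta$, so $\tau-\cos(v_j,c_i)>\Delta$. This is exactly a cosine margin of at least $\Delta$, strictly in fact, below the collision boundary, uniformly over all enrolled pairs at provisioning time.

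For (ii), the argument is a case split on $\cos(v_j,r')$ for a newly enrolled $r'$, using the collision convention of Assumption~\ref{ass:encoder} and Definition~\ref{def:bip}, under which a collision occurs iff $\cos\geq\tau$.
\begin{itemize}
\item If $\cos(v_j,r')<\tau_{\mathrm{safe}}$, nothing happens.
\item If $\cos(v_j,r')\in[\tau_{\mathrm{safe}},\tau)$, then $\cos<\tau$, so the pair is not a collision. It only enters the monitoring zone, which is a policy statement and not a mathematical one.
\item If $\cos(v_j,r')\geq\tau$, this is precisely the revocation condition of Remark~\ref{rem:galleryrelative}.
\end{itemize}
The intervals $[-1,\tau_{\mathrm{safe}})$, $[\tau_{\mathrm{safe}},\tau)$ and $[\tau,1]$ partition $[-1,1]$, which establishes ``only $\cos\geq\tau$ requires revocation.''

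For (iii), I set $g(\tau)=\sqrt{1-\tau^2}/\tau$ on $(0,1)$. Differentiating gives
\[
g'(\tau)=\frac{-\tau^2/\sqrt{1-\tau^2}-\sqrt{1-\tau^2}}{\tau^2}=-\frac{1}{\tau^2\sqrt{1-\tau^2}}<0 .
\]
So $g$ is strictly decreasing, and $\tau_{\mathrm{safe}}<\tau$ implies $\alpha^*(0,\tau_{\mathrm{safe}})>\alpha^*(0,\tau)$. This requires $\tau_{\mathrm{safe}}>0$ so that the formula and Corollary~\ref{cor:alpha} apply; I would state $\Delta<\tau$ as an implicit assumption. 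The numerical values follow from $\sqrt{1-0.36^2}/0.36\approx0.933/0.36\approx2.59$ and $\sqrt{1-0.391^2}/0.391\approx2.35$, which match Table~\ref{tab:capacity}.

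The only real obstacle is interpretive rather than technical. ``Requires more perturbation'' should be read through Corollary~\ref{cor:alpha}: it means the orthogonal worst-case sufficient threshold increases. It does not mean gallery-wide non-collision is implied by $\alpha$. I would state this scope explicitly, mirroring the Scope paragraph of Corollary~\ref{cor:alpha}.
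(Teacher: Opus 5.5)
Your proposal is correct and follows the paper's proof essentially line by line. Parts (i) and (ii) come directly from the provisioning hypothesis and the convention that a collision means $\cos\geq\tau$, and (iii) from the sign of $d\alpha^*(0,\tau)/d\tau$. You go slightly further than the paper by computing that derivative explicitly and by noting the implicit side condition $\tau_{\mathrm{safe}}>0$ (i.e., $\Delta<\tau$) needed for the closed form of $\alpha^*(0,\cdot)$ to apply.
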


\begin{proof}
\textit{Part (i).} The provisioning constraint gives
$\cos(v_j, c_i) < \tau_\text{safe} = \tau - \Delta < \tau$.

\textit{Part (ii).} A collision under the operating system
occurs iff $\cos(v_j, r') \geq \tau$; if
$\cos(v_j, r') < \tau$, no collision occurs regardless of
whether $\cos(v_j, r') \geq \tau_\text{safe}$.

\textit{Part (iii).} Monotonicity follows from
$d\alpha^*(0,\tau)/d\tau = -1/(\tau^2\sqrt{1-\tau^2}) < 0$,
so $\tau_\text{safe} < \tau$ implies
$\alpha^*(0, \tau_\text{safe}) > \alpha^*(0, \tau)$.
\end{proof}

\begin{remark}
The safety buffer reduces the available GV capacity bound.
Since $\mu(\cdot, \tilde{d})$ is \emph{strictly decreasing}
in $\tau$:
\begin{equation}
  \tau_\text{safe} < \tau
  \;\implies\;
  \mu(\tau_\text{safe}, \tilde{d}) \;\geq\; \mu(\tau, \tilde{d})
  \;\implies\;
  \frac{1}{\mu(\tau_\text{safe}, \tilde{d})}
  \;\leq\;
  \frac{1}{\mu(\tau, \tilde{d})},
\end{equation}
so $A_\text{GV}(\tau_\text{safe}) \leq A_\text{GV}(\tau)$.
The capacity cost depends strongly on the choice of $\Delta$:
at $\tau_\text{safe}{=}0.360$, $\tau{=}0.391$
($\Delta{=}0.031$, one operating step):
$A_\text{GV}\geq1.81\times10^9\approx2^{30.8}$,
giving headroom $\approx1{,}810\times$ over $|\mathcal{V}|{=}10^6$,
a comfortable margin.
By contrast, at $\tau_\text{safe}{=}0.319$
($\Delta{=}0.072$, two operating steps below):
$A_\text{GV}\geq2.38\times10^7\approx2^{24.5}$,
leaving only $\approx24\times$ headroom over one million
identities, a tight constraint that may limit provisioning
at scale.
Practitioners should choose $\Delta$ based on the expected
enrollment growth rate.
\end{remark}

\subsection{Empirical Effective Capacity Estimator}
\label{app:empirical}

\Paragraph{Distinction from ambient-sphere capacity.}
$A_\text{GV}(\tilde{d}, \tau)$ is a packing lower bound on the
ambient sphere $\mathbb{S}^{\tilde{d}-1}$ under the submanifold
approximation (Sec.~\ref{app:effective_capacity}).
The empirical estimator below measures a distinct quantity:
\begin{equation}
  A_{\mathrm{eff}}(\tau)
  := \frac{1}{p_{\mathrm{coll}}(\tau)},
  \qquad
  p_{\mathrm{coll}}(\tau)
  = \Pr\!\bigl[\cos(v_j, r') \geq \tau\bigr],
\end{equation}
the inverse empirical collision probability for a real identity
$r'$ drawn from $\mathcal{M}_\text{face}$.
Under the idealized uniform $\mathbb{S}^{\tilde{d}-1}$ model,
$p_\text{coll} = \mu(\tau, \tilde{d})$ and
$A_\text{eff} = A_\text{GV}$ numerically; in general they are
different mathematical objects measuring different things, as
discussed below.

\Paragraph{Statistical Model.}
Let $v_j \in \mathcal{V}$ ($j = 1, \ldots, N$) be provisioned
virtual identities and let
$r'_l \in \mathcal{R}_\text{test}$ ($l = 1, \ldots, L$) be
unseen real identities drawn independently from
$\mathcal{M}_\text{face}$.
Define the collision indicator
$X_{jl} = \mathbf{1}[\cos(v_j, r'_l) \geq \tau]$,
with $\Pr[X_{jl}=1] = p_\text{coll}(\tau) = 1/A_\text{eff}(\tau)$
by the definition of $A_\text{eff}$.
The total collision count
$C = \sum_{j=1}^{N}\sum_{l=1}^{L} X_{jl}$
satisfies:
\begin{equation}
  C \;\sim\;
  \mathrm{Poisson}\!\left(\frac{NL}{A_\mathrm{eff}}\right),
  \label{eq:poisson_model}
\end{equation}
%
by the Poisson limit theorem for sums of rare independent
events, where the individual event probability
$p_\mathrm{coll}=1/A_\mathrm{eff}$ is small; the expected count
$\lambda=NL/A_\mathrm{eff}$ need not be much smaller than one.

\Paragraph{Maximum Likelihood Estimator.}
The log-likelihood of model~\eqref{eq:poisson_model} is:
\begin{equation}
  \ell(A)
  = -\frac{NL}{A} + C\log\frac{NL}{A} - \log C!\,.
\end{equation}
Setting $d\ell/dA = 0$ yields:
\begin{equation}
  \hat{A}_{\mathrm{eff}}^{\mathrm{MLE}}
  = \frac{NL}{C}
  = \frac{|\mathcal{V}| \times |\mathcal{R}_\text{test}|}{C}.
  \label{eq:mle}
\end{equation}

\Paragraph{Confidence Interval.}
An exact $(1 - \alpha_\text{CI})$ confidence interval for
$A_\text{eff}$ is obtained by inverting the exact Poisson
confidence interval for $C$.
With $\chi^2_{\nu,q}$ denoting the $q$-th quantile of the
chi-squared distribution with $\nu$ degrees of freedom:
\begin{equation}
  \left[
    \frac{NL}{\dfrac{1}{2}\chi^2_{2(C+1),\;1-\alpha_\text{CI}/2}},\;
    \frac{NL}{\dfrac{1}{2}\chi^2_{2C,\;\alpha_\text{CI}/2}}
  \right].
  \label{eq:ci}
\end{equation}
This interval maintains the stated coverage probability
under the Poisson model regardless of sample size.

\Paragraph{Zero-Collision Lower Bound.}
When $C=0$, the MLE~\eqref{eq:mle} is undefined.
From $P(C=0)=e^{-NL/A_\text{eff}}\geq0.05$:
\begin{equation}
  \hat{A}_\text{eff}>\frac{NL}{\ln20}\approx\frac{NL}{2.996}.
  \label{eq:zero_bound}
\end{equation}
For $|\mathcal{V}|=1\text{M}$, $|\mathcal{R}_\text{test}|=180\text{K}$:
\begin{equation}
  NL=1.8\times10^{11},
  \qquad
  \hat{A}_\text{eff}>6.0\times10^{10}\approx2^{35.8}.
\end{equation}
$\hat{A}_\text{eff}$ and $A_\text{GV}$ measure different
quantities: $\hat{A}_\text{eff}^{-1}$ estimates the per-pair
collision probability between provisioned virtual identities
and held-out real identities under the empirical real-face
distribution, while $A_\text{GV}$ lower-bounds the ambient-sphere
packing number under a uniform-measure model
(Sec.~\ref{app:effective_capacity}). They are complementary
capacity references --- empirical collision density and
model-based combinatorial capacity --- not competing bounds on
the same object. The numerical relationship between them
depends on how well the uniform-sphere model approximates the
empirical real-face distribution; we make no claim that one
validates the other.

\Paragraph{Poisson Approximation Validity.}
The Poisson approximation requires individual collision
probabilities $\mu(\tau,\tilde{d})\ll1$, not necessarily
$\lambda=NL\cdot\mu\ll1$.
Across all six operating points, the cap probability remains small:
the largest value occurs at the most permissive threshold
$\tau{=}0.319$, where $\mu=4.21\times10^{-8}$; at the primary
operating point $\tau{=}0.391$, $\mu=1.35\times10^{-11}$; and at
the strictest recognition threshold $\tau{=}0.448$,
$\mu=4.92\times10^{-15}$.
These small individual collision probabilities support the Poisson
approximation under the rare-event model.
The expected total collision count $\lambda$ varies substantially
across operating points. For $|\mathcal{V}|{=}1$M and
$|\mathcal{R}_\text{test}|{=}180$K, giving
$NL=1.8\times10^{11}$:
\begin{equation}
  \lambda(0.319)
  = 1.8\times10^{11}\times4.21\times10^{-8}
  \approx 7.58\times10^3,
\end{equation}
\begin{equation}
  \lambda(0.391)
  = 1.8\times10^{11}\times1.35\times10^{-11}
  \approx 2.43,
\end{equation}
\begin{equation}
  \lambda(0.448)
  = 1.8\times10^{11}\times4.92\times10^{-15}
  \approx 8.9\times10^{-4}.
\end{equation}
At $\tau{=}0.319$, $\lambda\gg1$, so zero-collision bounds
are not meaningful and the MLE Eq.~\eqref{eq:mle} should be
used directly.
At $\tau{=}0.391$, $\lambda\approx2.43$: $C{=}0$ with
probability $\approx9\%$, $C{=}1$ with $\approx21\%$, and
$C{\geq}2$ with $\approx70\%$; the MLE applies in the majority
of cases.
At $\tau{=}0.448$, $\lambda\ll1$ and $C{=}0$ is near-certain
($>99.9\%$), so the zero-collision bound Eq.~\eqref{eq:zero_bound}
will typically apply.
We verify stability of $\hat{A}_\text{eff}$ as
$|\mathcal{R}_\text{test}|$ grows from 36K to 180K
(Sec.~\ref{sec:open-world}); flat curves confirm that estimates
are not dominated by local demographic density effects.

\Paragraph{Assumption Audit.}
The estimator~\eqref{eq:mle} rests on two assumptions:

\textbf{(i) Uniform distribution on $\mathcal{M}_\text{face}$.}
Real identities cluster by demographic factors (ethnicity,
age, gender), so $r'_l$ is not strictly uniform over
$\mathcal{M}_\text{face}$.
If $\mathcal{V}$ resides in a demographically sparse region,
$\hat{A}_\text{eff}$ is biased upward.
The stability check described above detects this.

\textbf{(ii) Independence of collision events.}
$X_{jl}$ and $X_{j'l}$ for different $j, j'$ at fixed $l$
are not exactly independent when $v_j$ and $v_{j'}$ are
close.
The inter-class separability constraint of
Eq.~\eqref{eq:separability} guarantees
$\cos(v_j, v_{j'}) < \tau$ for all $j \neq j'$;
under the Poisson limit theorem, the approximation
in~\eqref{eq:poisson_model} holds under weak dependence
when $p_\text{coll} \ll 1$, which is satisfied here.

\subsection{Relationship between Capacity, $\alpha$, and
PCA Noise}\label{app:alpha_kappa}

The ambient-sphere reference $A_\text{GV}(\tilde{d},\tau)$ is
determined by $(\tau, \tilde{d})$ alone and is independent of
the proposal parameters $\alpha$ and $\kappa$. These parameters
control how efficiently the algorithm proposes candidates that
pass the hard BIP checks.

Define the valid region at step $t$:
\begin{equation}
  \mathcal{V}_t^*
  = \bigl\{
    v \in \mathbb{S}^{d-1} :
    \cos(v, c_i) < \tau\;\forall c_i \in \mathcal{R},\;
    \cos(v, v_j) < \tau\;\forall v_j \in \mathcal{V}_t
  \bigr\},
\end{equation}
and the acceptance probability at step $t$:
\begin{equation}
  P_\text{accept}^{(t)}(\alpha,\kappa)
  = \Pr\!\Bigl[
    \mathrm{normalize}(r + \alpha z) \in \mathcal{V}_t^*
  \Bigr].
\end{equation}
Then:
\begin{equation}
  \mathbb{E}[|\mathcal{V}|]
  = \sum_{t=1}^{N_\text{attempts}} P_\text{accept}^{(t)}(\alpha,\kappa)
  \;\approx\;
  \bar{P}_\text{accept}(\alpha,\kappa)
  \cdot N_\text{attempts},
  \qquad
  \mathbb{E}[|\mathcal{V}|] \leq A_{\mathrm{pack}},
  \label{eq:fundamental}
\end{equation}
where $\bar{P}_\text{accept}$ is the average acceptance
probability over the provisioning process.

\textbf{$\kappa = 0$ (pure repulsion):}
$z = z^*$ provides directional bias away from the
neighborhood centroid but yields low diversity across
repeated candidates sampled from the same reference $r$.

\textbf{$\kappa\to\infty$ (pure PCA noise):}
Under a crude disjoint-cap and uniform-density approximation:
\begin{equation}
  P_\text{accept}^{(t)}
  \approx
  1-(M+|\mathcal{V}_t|)\,\mu(\tau,\tilde{d})
  = 1-\frac{M+|\mathcal{V}_t|}{A_\text{GV}}.
  \label{eq:paccept_limit}
\end{equation}
At $\tau=0.391$ and $|\mathcal{V}|=10^6$:
\begin{equation}
  1-\frac{360{,}232+10^6}{7.41\times10^{10}}
  \approx 99.998\%.
\end{equation}
For the 10M embedding-allocation experiment:
\begin{equation}
  1-\frac{360{,}232+10^7}{7.41\times10^{10}}
  \approx 99.986\%.
\end{equation}
The acceptance probability remains near unity throughout
provisioning at both scales.

PCA-aware weighting by $\sigma_k$ \emph{biases} perturbations
toward high-variance identity directions and suppresses
low-variance directions, keeping candidates close to the
empirically dominant face-identity subspace.
The sum in Eq.~\eqref{eq:perturbation} runs over all
$d = 512$ components, so perturbations are not strictly
confined to the top-$\tilde{d}$ subspace, but the contribution from low-variance directions is substantially
down-weighted by $\sigma_k$, although the residual variance
beyond the top-$\tilde{d}$ components is not exactly zero.

\section{Additional Implementation Details of $G$ (GapGen)}
\label{app:gapgen}

\begin{figure}[t]
  \centering
  \includegraphics[width=\linewidth]{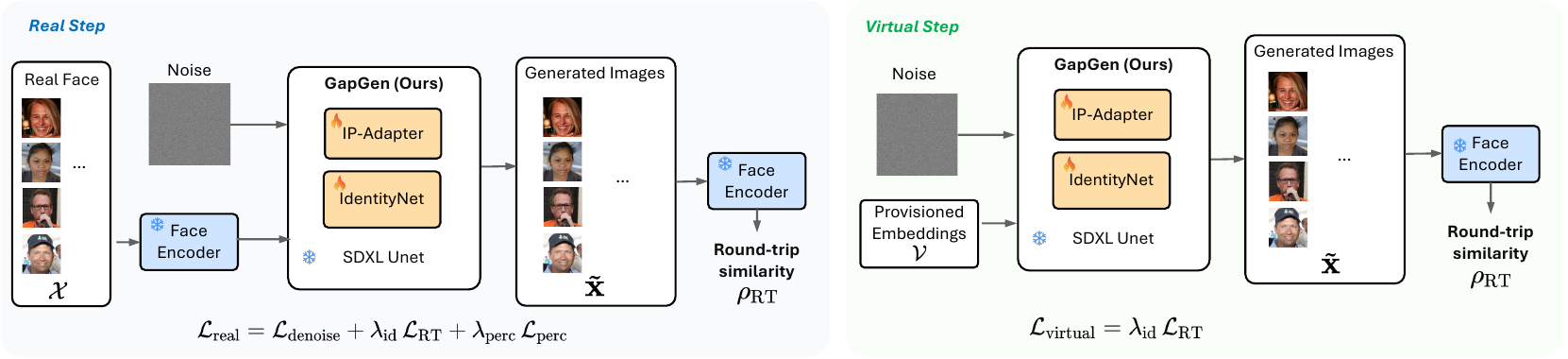}
  \caption{\textbf{GapGen training pipeline.} 
  Real and virtual steps are interleaved at mixing ratio $\beta$ during 
  fine-tuning. \textbf{Real step} (left): real face $x \in \mathcal{X}$ 
  is encoded by the frozen face encoder $\phi$ to produce target 
  $e^* = \phi(x)$; GapGen renders $\tilde{x} = G(e^*)$, supervised by 
  $\mathcal{L}_\text{denoise} + \lambda_\text{id}\mathcal{L}_\text{RT} 
  + \lambda_\text{perc}\mathcal{L}_\text{perc}$.
  \textbf{Virtual step} (right): a provisioned embedding 
  $s \in \mathcal{V}$ is fed directly as condition with no paired 
  ground-truth image; supervision relies solely on the round-trip 
  identity loss $\mathcal{L}_\text{RT} = 1 - \cos(\phi(\tilde{x}), s)$.
  IP-Adapter cross-attention projections and IdentityNet 
  are jointly fine-tuned; SDXL UNet and the face encoder remain frozen throughout.}
  \label{fig:gapgen_arch}
\end{figure}

Fig.~\ref{fig:gapgen_arch} summarizes the GapGen training pipeline. 
Below we describe the base pipeline, conditioning, sampling, round-trip 
re-encoding, and the gap-aware fine-tuning curriculum in turn.

\Paragraph{Base pipeline.}
The SDXL UNet and text encoders are frozen; the two
identity-aware components, the IP-Adapter cross-attention
projections and the IdentityNet, are jointly fine-tuned.

\Paragraph{Conditioning.}
Because BIP allocates and renders identities in the same ArcFace
embedding space, a provisioned
$s \in \mathcal{V} \subset \mathbb{R}^{512}$ is fed directly through the
pipeline's \texttt{image\_embeds} input \emph{without} any face image,
face detector, or projection MLP on the condition side. We use a fixed
canonical 5-point keypoint layout (face scale $0.28$ of the canvas,
$y$-offset $-5\%$) as the IdentityNet condition for every identity.

\Paragraph{Sampling.}
All images are generated at $1024 \times 1024$ with
$\text{steps}{=}30$, $\text{guidance\_scale}{=}3.0$,
$\text{controlnet\_scale}{=}0.8$, $\text{ip\_adapter\_scale}{=}0.8$, and
the default SDXL scheduler. For virtual step, we use a single fixed prompt
(\textit{``candid color portrait photo of a person, natural lighting''})
and negative prompt suppressing CGI/cartoon styles, watermarks,
oversmooth skin, and low-resolution artifacts.

\Paragraph{Round-trip re-encoding.}
We re-extract embeddings from
$\tilde{x}=G(s)$ with the same frozen ArcFace pipeline used during
allocation: \texttt{antelopev2} face detection followed by
\texttt{glintr100} (IResNet-100). The re-encoded
$\tilde{e}=\phi(\tilde{x})$ is directly comparable to the target
$s$ via cosine similarity since both lie on the same 512-d unit sphere.

\Paragraph{Fine-tuning.}
We initialize the IP-Adapter from the pretrained InstantID
checkpoint and IdentityNet from the pretrained ControlNet, then jointly
fine-tune both on a gap-aware curriculum of $(s,\tilde{x})$ pairs
(Sec.~\ref{sec:realization}) with the standard denoising loss.
SDXL UNet and text encoders remain frozen throughout.




\section{Verification, Visual Examples, and IAPCT}
\label{app:iapct}

\subsection{v-LFW Protocol and Visual Examples}
\label{app:vlfw}

\Paragraph{Construction.}
We construct \textbf{v-LFW} as a structural counterpart to LFW: it
inherits the identity count and the official $10$-fold $\times\,600$-pair
split, but populates every identity slot with a freshly allocated
virtual identity $s_n \in \mathcal{V}$ (distinct from any real LFW
person by BIP non-collision), rendered as $\tilde{x}_n = G(s_n)$ at
$1024{\times}1024$ following the conditioning in
App.~\ref{app:gapgen}.

\textbf{Protocols.} The shared pair list is then reused under four protocol families:
\begin{itemize}
  \item \textbf{R-R}: standard LFW verification on original real images. 
        Each pair $(x_a, x_b)$ from LFW is scored by 
        $\cos(\phi(x_a), \phi(x_b))$ and thresholded at the per-fold 
        operating point $\tau_{\mathrm{ver}}$.
  \item \textbf{V-V}: same protocol as R-R but on v-LFW images at the 
        matched slot positions, so the same/different structure is 
        preserved over virtual identities.
  \item \textbf{R-V}: one image taken from LFW, the other from v-LFW at 
        the same slot index. Every R-V pair crosses both identity and 
        reality by construction, so its score distribution is a non-mate 
        distribution. We report FAR at the threshold calibrated on R-R 
        same-identity scores at TAR\,=\,95\%.
  \item \textbf{Detection}: per-image binary classification of real 
        vs.\ virtual without identity labels, evaluated on the union of 
        LFW and v-LFW images. We report AUC of the IAPCT forensics score 
        $\hat{p}(x)$ (Sec.~\ref{app:iapct-arch}).
  \item \textbf{Unified}: joint recognition and detection on the combined 
        LFW + v-LFW pair set. A pair is unified-correct iff verification 
        and the per-image real-vs-virtual decisions are all correct; 
        the formal indicator is given in Eq.~(\ref{eq:unified}) below.
\end{itemize}

\Paragraph{Visual examples.}

Fig.~\ref{fig:intra_class} shows multiple images rendered from the same 
virtual identity $s \in \mathcal{V}$ under varying pose configurations, 
lighting conditions, and text style prompts. 
Each row corresponds to one virtual identity; columns show three independent generations sharing the same conditioning embedding $s$ but differing in keypoint layout, prompt, and sampling seed. Identity is preserved across columns within each row, while pose, expression, lighting, and background vary, supporting the within-identity variation required for v-LFW (Tab.~\ref{tab:vlfw}).

\begin{figure}[t]
  \centering
  \includegraphics[width=\linewidth]{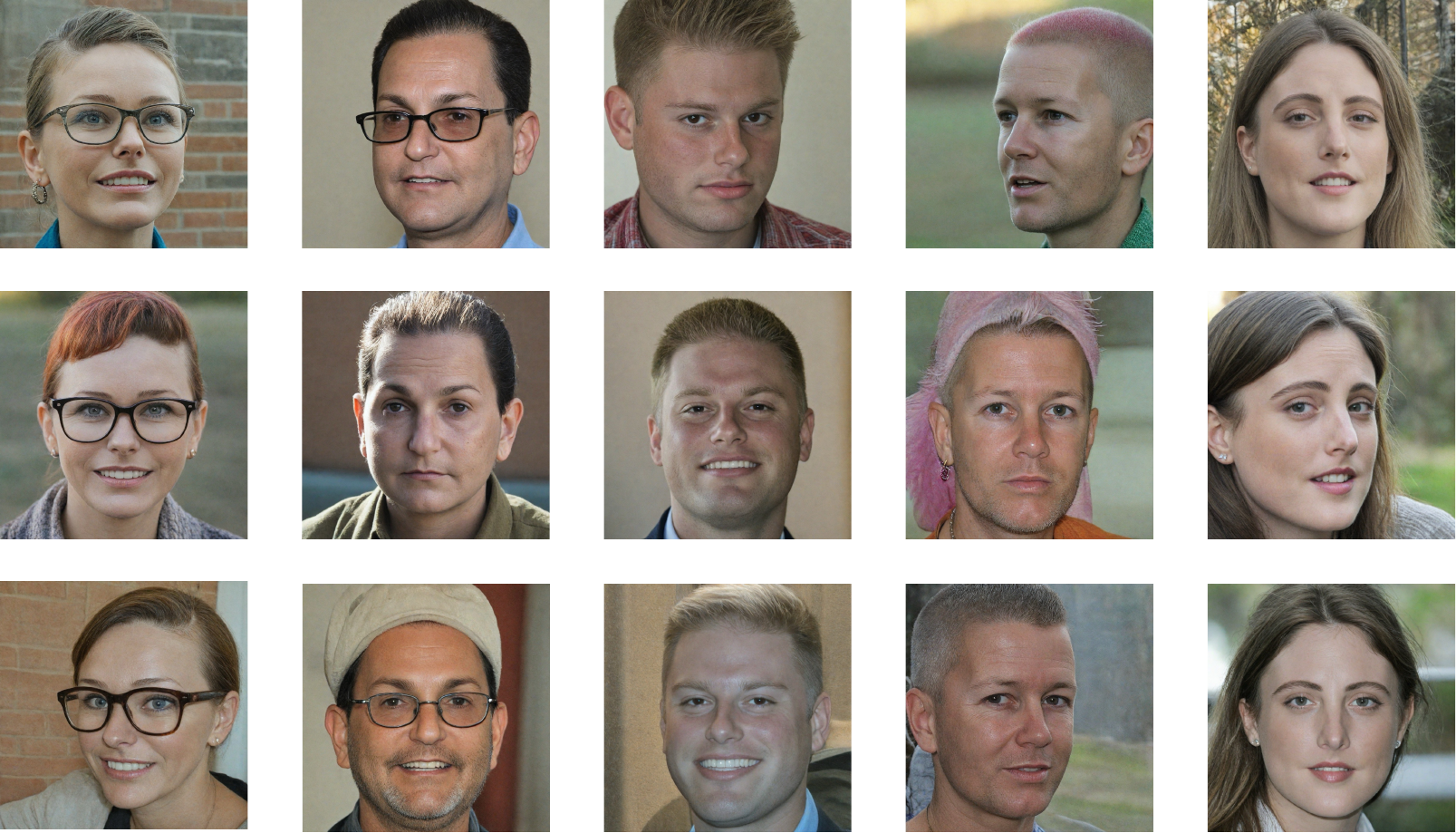}
  \caption{\textbf{v-LFW visual examples.} Each row shows three images 
  rendered from a single virtual identity $s \in \mathcal{V}$ at 
  $\alpha{=}4$, varying pose, lighting, and style while preserving 
  identity.}
  \label{fig:intra_class}
\end{figure}

\subsection{IAPCT: Identity-Anchored Patch Consistency Transformer}
\label{app:iapct-arch}

\Paragraph{Design.}
IAPCT augments the frozen ArcFace backbone with a transformer
head that interrogates whether local patch statistics at each
intermediate layer are consistent with the global identity
embedding $e = \phi(x)$.
The recognition path is preserved verbatim; the forensics
path is additive and introduces no overhead at inference
beyond the transformer forward pass.

\Paragraph{Backbone and intermediate features.}
We use IResNet-100 with pretrained ArcFace weights.
For an aligned input $x \in \mathbb{R}^{3\times112\times112}$,
spatial feature maps are tapped at four intermediate stages:
$\{F_l\}_{l=1}^{4}$, with spatial resolutions
$56{\times}56$, $28{\times}28$, $14{\times}14$, $7{\times}7$
and channel widths $64$, $128$, $256$, $512$ respectively.
$e_5 = \phi(x) \in \mathbb{R}^{512}$ is the standard
L2-normalized ArcFace identity output, left unchanged for
downstream recognition.

\Paragraph{Tokenisation.}
For each stage $l$, all spatial positions $p$ of $F_l$ are
extracted as patch tokens and projected to
$d_\text{model}{=}256$ via a per-stage linear layer:
\begin{equation}
  t_{l,p} = W_l\, F_l[:,p] \in \mathbb{R}^{256},
  \quad p = 1,\ldots,H_l W_l.
\end{equation}
The identity anchor is projected as:
$e_\text{proj} = W_\text{id}\, e \in \mathbb{R}^{256}$.

\Paragraph{Transformer encoder.}
A 4-layer transformer encoder processes the patch tokens
$\{t_{l,p}\}$ from all four stages jointly.
Each layer applies self-attention among spatial tokens
followed by cross-attention to the identity anchor
$e_\text{proj}$, producing patch-identity consistency scores:
\begin{equation}
  \gamma_{l,p} = \sigma\!\left(
  \frac{(W_Q\,t_{l,p}) \cdot (W_K\,e_\text{proj})}{\sqrt{d_k}}
  \right) \in (0,1),
\end{equation}
where $\sigma$ is the sigmoid function.
Real faces yield uniformly high $\gamma_{l,p}$ since every
spatial region originates from the same physical person;
virtual faces yield heterogeneous $\gamma_{l,p}$ due to
unconstrained intermediate features.

\Paragraph{Head and loss.}
A \texttt{[CLS]} token produces a forensics logit through:
\[
\mathrm{LN}(256)
\to \mathrm{Linear}(256{\to}128) \to \mathrm{GELU}
\to \mathrm{Dropout}(0.1) \to \mathrm{Linear}(128{\to}1).
\]
Training loss:
$\mathcal{L} = \mathrm{BCE}(\hat{y}, y) +
\lambda_c(H(\gamma^\text{virtual}) - H(\gamma^\text{real}))$,
where $H(\gamma^{(x)}) = -\sum_{l,p}\bar{\gamma}_{l,p}^{(x)}
\log\bar{\gamma}_{l,p}^{(x)}$ is the entropy of the normalized
attention distribution, $y{=}0$ for real and $y{=}1$ for
virtual, and $\phi$ remains frozen throughout.

\Paragraph{Training.}
Real samples: Glint360K $112{\times}112$ aligned crops.
Virtual samples: $100$K images at $\alpha{=}4$ from GapGen.
Batch size $128$ with $30\%$ virtual fraction; weighted
random sampling. AdamW, lr $10^{-4}$, weight decay $10^{-2}$,
gradient clip $5$.

\Paragraph{Unified metric.}
A pair $(x_a,x_b)$ is unified-correct iff verification and the
two per-image real-vs-virtual decisions are all correct:
\begin{equation}\label{eq:unified}
\mathbf{1}_{\mathrm{uni}}=
\mathbf{1}[\hat{y}_{\mathrm{ver}}{=}y_{\mathrm{ver}}]\cdot
\mathbf{1}[\hat{y}^{a}_{\mathrm{rv}}{=}y^{a}_{\mathrm{rv}}]\cdot
\mathbf{1}[\hat{y}^{b}_{\mathrm{rv}}{=}y^{b}_{\mathrm{rv}}],
\end{equation}
with $\hat{y}_{\mathrm{ver}}=\mathbf{1}[\cos(e_5(x_a),e_5(x_b))\geq\tau_{\mathrm{ver}}]$
at the per-fold LFW threshold and
$\hat{y}^{\bullet}_{\mathrm{rv}}=\mathbf{1}[\hat{p}(x_\bullet)\geq0.5]$.
Unified accuracy averages $\mathbf{1}_{\mathrm{uni}}$ over the
union of R-R, V-V, and R-V pairs. The joint indicator (rather
than an average over verification and detection accuracies)
matches the deployment scenario where a system must
simultaneously verify the identity \emph{and} flag the face as
real or virtual.

\section{Additional Results}
\label{app:add}

\subsection{t-SNE of $\mathcal{R}$ vs.\ $\mathcal{V}$}
\label{app:add-tsne}

Fig.~\ref{fig:tsne} shows a t-SNE projection of real identity centroids
$\mathcal{R}$ (rendered as a gray density background) together with
virtual identities $\mathcal{V}$ at $\alpha\!\in\!\{2,5\}$ ($1$K samples
per panel). Each point in $\mathcal{V}$ is coloured by
$\max_{c_j\in\mathcal{R}}\cos(s,c_j)$, i.e., its cosine similarity to the
nearest real centroid: red indicates virtual identities that lie close to
some real cluster, while blue indicates identities that fall into
low-density gaps between clusters. Three observations are consistent with
Obs.~\ref{prop:manifold}. (i) $\mathcal{V}$ interleaves with
$\mathcal{R}$ rather than forming a separate cloud: virtual identities
appear throughout the manifold, including regions dense with real
centroids. (ii) Increasing $\alpha$ from $2$ to $5$ systematically pushes
$\mathcal{V}$ away from real clusters and into the gaps: the mean
max-cos decreases from $0.344$ ($\alpha{=}2$) to $0.288$ ($\alpha{=}5$),
consistent with the non-collision improvement reported in
Tab.~\ref{tab:alpha_threshold_main}. 

\begin{figure}[t]
  \centering
  \includegraphics[width=\linewidth]{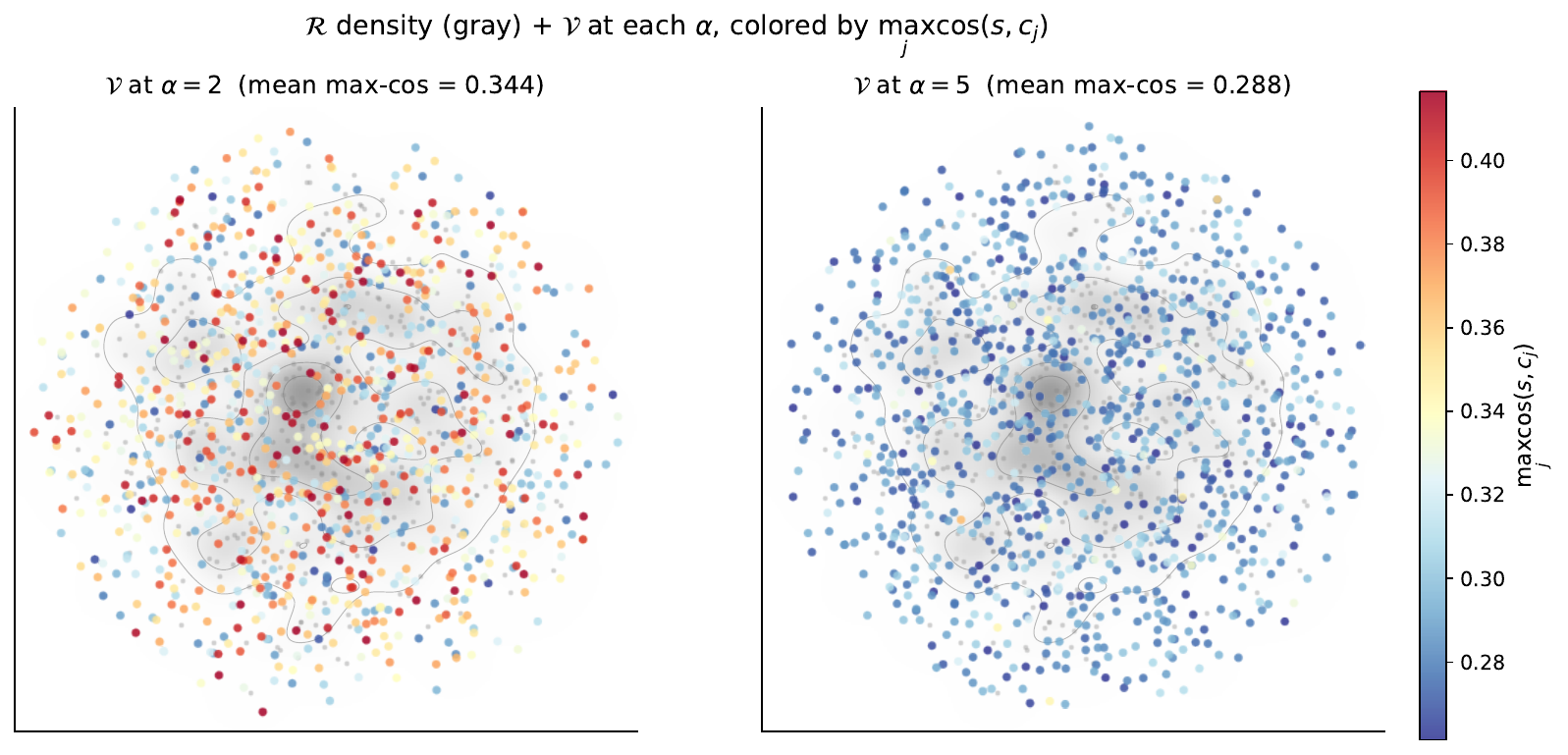}
\caption{\textbf{t-SNE of real and virtual identities.}
Real centroids $\mathcal{R}$ are shown as a gray density background.
Each virtual identity in $\mathcal{V}$ is coloured by
$\max_{c_j\in\mathcal{R}}\cos(s,c_j)$ (red: close to a real
cluster; blue: located in a low-density gap).
Increasing $\alpha$ shifts virtual identities from regions dense
with real centroids ($\alpha{=}2$, mean max-cos $0.344$) toward
low-density gaps ($\alpha{=}5$, mean max-cos $0.288$).
No internal collapse is observed within $\mathcal{V}$.}
  \label{fig:tsne}
\end{figure}

\subsection{Additional Image Grids}
\label{app:add-grid}

Fig.~\ref{fig:grid_extra} shows $40$ randomly selected faces generated
by $G$ from virtual identities $s \in \mathcal{V}$ at $\alpha{=}4$.
The samples exhibit substantial diversity in gender, age, ethnicity,
hairstyle, and accessories (e.g.\ glasses), while showing no obvious
identity duplication across images. This provides qualitative evidence
that the perturbation-based provisioning scheme,
$s = r + \alpha z$, yields novel and diverse identities without
collapse, even at larger scale.

\begin{figure}[t]
  \centering
  \includegraphics[width=0.95\linewidth]{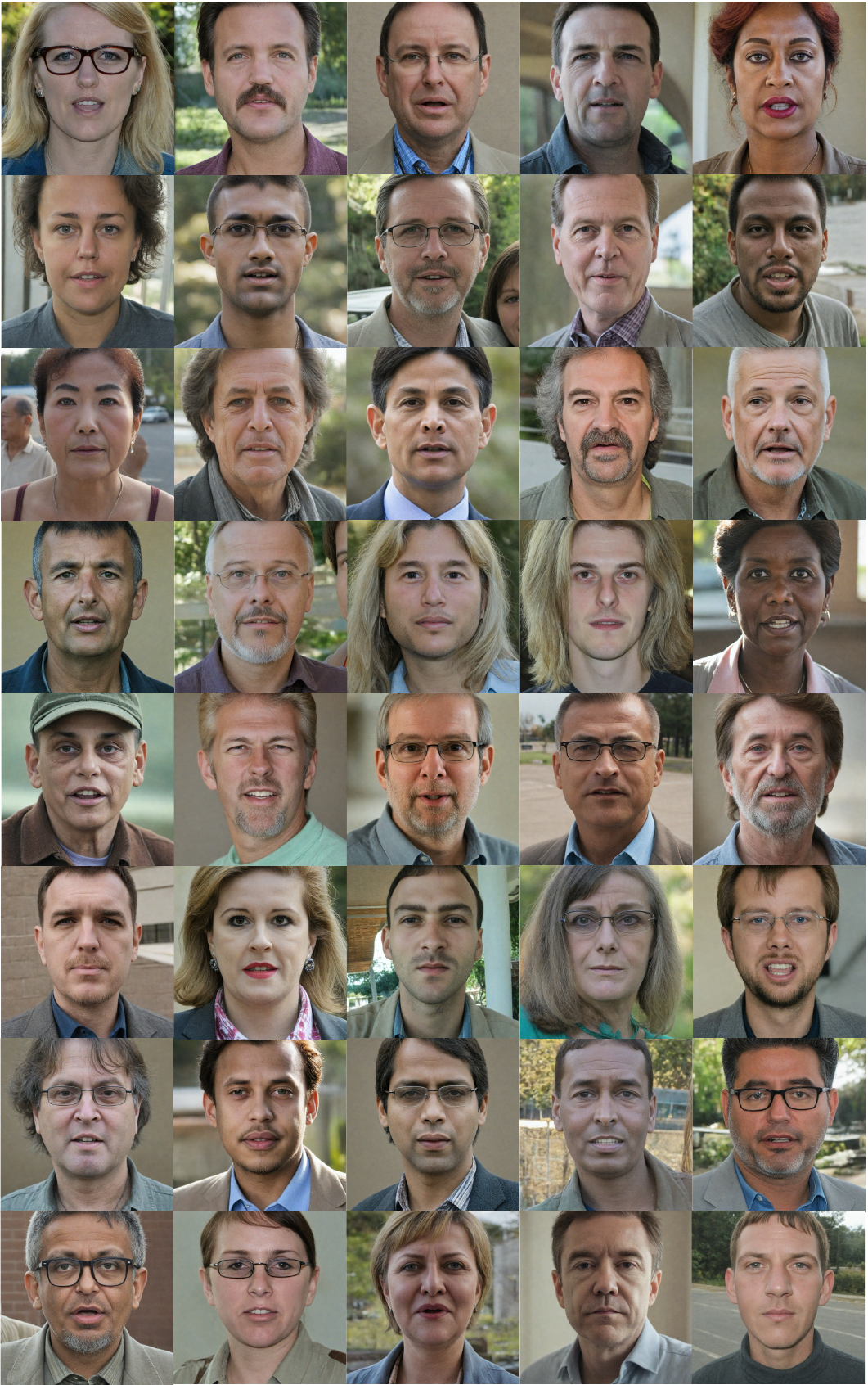}
  \caption{\textbf{Sample virtual identities from BIP.}
  Forty randomly sampled virtual identities rendered by GapGen
at $1024{\times}1024$, spanning diverse demographics, age,
and appearance.}
  \label{fig:grid_extra}
\end{figure}




\clearpage


\end{document}